%%%%%%%% ICML 2025 EXAMPLE LATEX SUBMISSION FILE %%%%%%%%%%%%%%%%%

\documentclass{article}

% Recommended, but optional, packages for figures and better typesetting:
\usepackage{microtype}
\usepackage{graphicx}
\usepackage{subfigure}
\usepackage{booktabs} % for professional tables

% hyperref makes hyperlinks in the resulting PDF.
% If your build breaks (sometimes temporarily if a hyperlink spans a page)
% please comment out the following usepackage line and replace
% \usepackage{icml2025} with \usepackage[nohyperref]{icml2025} above.
\usepackage{hyperref}

% Attempt to make hyperref and algorithmic work together better:

% Use the following line for the initial blind version submitted for review:
% \usepackage{icml2025}

% If accepted, instead use the following line for the camera-ready submission:
\usepackage[accepted]{icml2025}

% For theorems and such
% \usepackage{amsmath}
% \usepackage{amssymb}
% \usepackage{mathtools}
% \usepackage{amsthm}
\usepackage{mathtools,amssymb,amsthm,bm,centernot}
\usepackage[Symbol]{upgreek}
\usepackage{float}

\newcommand\Ed{\mathbb{E}}
\newcommand\du{\mathrm{d}}

\newcommand\Rd{\mathbb{R}}

\newcommand\Bc{\mathcal{B}}
\newcommand\Nc{\mathcal{N}}
\newcommand\rbr[1]{\left(#1\right)}
\newcommand\sbr[1]{\left[#1\right]}
\newcommand\cbr[1]{\left\{#1\right\}}

\newcommand\abs[1]{\left|#1\right|}
\newcommand\Var{\operatorname{Var}}

\newcommand\KL{\operatorname{KL}}
\newcommand\ELBO{\operatorname{ELBO}}

\makeatletter
\newcommand{\bigperp}{%
  \mathop{\mathpalette\bigp@rp\relax}%
  \displaylimits
}

\newcommand{\bigp@rp}[2]{%
  \vcenter{
    \m@th\hbox{\scalebox{\ifx#1\displaystyle2.1\else1.5\fi}{$#1\perp$}}
  }%
}
\makeatother

\usepackage{makecell}

% if you use cleveref..
\usepackage[capitalize,noabbrev]{cleveref}

%%%%%%%%%%%%%%%%%%%%%%%%%%%%%%%%
% THEOREMS
%%%%%%%%%%%%%%%%%%%%%%%%%%%%%%%%
\theoremstyle{plain}
\newtheorem{theorem}{Theorem}[section]

\theoremstyle{definition}

\theoremstyle{remark}

% Todonotes is useful during development; simply uncomment the next line
%    and comment out the line below the next line to turn off comments
%\usepackage[disable,textsize=tiny]{todonotes}
\usepackage[textsize=tiny]{todonotes}

% The \icmltitle you define below is probably too long as a header.
% Therefore, a short form for the running title is supplied here:
\icmltitlerunning{A Revisit of Total Correlation in Disentangled Variational Auto-Encoder with Partial Disentanglement}

\begin{document}

\twocolumn[
\icmltitle{A Revisit of Total Correlation in Disentangled Variational Auto-Encoder with Partial Disentanglement}

% It is OKAY to include author information, even for blind
% submissions: the style file will automatically remove it for you
% unless you've provided the [accepted] option to the icml2025
% package.

% List of affiliations: The first argument should be a (short)
% identifier you will use later to specify author affiliations
% Academic affiliations should list Department, University, City, Region, Country
% Industry affiliations should list Company, City, Region, Country

% You can specify symbols, otherwise they are numbered in order.
% Ideally, you should not use this facility. Affiliations will be numbered
% in order of appearance and this is the preferred way.
\icmlsetsymbol{equal}{*}

\begin{icmlauthorlist}
\icmlauthor{Chengrui Li}{gatech}
\icmlauthor{Yunmiao Wang}{emory}
\icmlauthor{Yule Wang}{gatech}
\icmlauthor{Weihan Li}{gatech}
\icmlauthor{Dieter Jaeger}{emory}
\icmlauthor{Anqi Wu}{gatech}
\end{icmlauthorlist}

\icmlaffiliation{gatech}{School of Computational Science \& Engineering, Georgia Institute of Technology, Atlanta, GA, USA}
\icmlaffiliation{emory}{Department of Biology, Emory University, Atlanta, GA, USA}

\icmlcorrespondingauthor{Anqi Wu}{anqiwu@gatech.edu}

% You may provide any keywords that you
% find helpful for describing your paper; these are used to populate
% the "keywords" metadata in the PDF but will not be shown in the document
\icmlkeywords{disentangling variational auto-encoder, independent component analysis, neural subspace, neuroscience}

\vskip 0.3in
]

% this must go after the closing bracket ] following \twocolumn[ ...

% This command actually creates the footnote in the first column
% listing the affiliations and the copyright notice.
% The command takes one argument, which is text to display at the start of the footnote.
% The \icmlEqualContribution command is standard text for equal contribution.
% Remove it (just {}) if you do not need this facility.

\printAffiliationsAndNotice{}  % leave blank if no need to mention equal contribution
% \printAffiliationsAndNotice{\icmlEqualContribution} % otherwise use the standard text.

\begin{abstract}
A fully disentangled variational auto-encoder (VAE) aims to identify disentangled latent components from observations. However, enforcing full independence between all latent components may be too strict for certain datasets. In some cases, multiple factors may be entangled together in a non-separable manner, or a single independent semantic meaning could be represented by multiple latent components within a higher-dimensional manifold. To address such scenarios with greater flexibility, we develop the Partially Disentangled VAE (PDisVAE), which generalizes the total correlation (TC) term in fully disentangled VAEs to a partial correlation (PC) term. This framework can handle group-wise independence and can naturally reduce to either the standard VAE or the fully disentangled VAE. Validation through three synthetic experiments demonstrates the correctness and practicality of PDisVAE. When applied to real-world datasets, PDisVAE discovers valuable information that is difficult to find using fully disentangled VAEs, implying its versatility and effectiveness.
\end{abstract}

\section{Introduction}
Disentangling independent latent components from observations is a desirable goal in representational learning \citep{bengio2013representation, alemi2016deep, schmidhuber1992learning, achille2017emergence}, with numerous applications in fields such as computer vision and image processing \citep{lake2017building}, signal analysis \citep{hyvarinen2000independent, hyvarinen2017nonlinear}, and neuroscience \citep{zhou2020learning, yang2021causalvae, wang2024exploring, calhoun2009review}. To disentangle latent components in an unsupervised manner, most models employ techniques that combine optimizing a variational auto-encoder (VAE) \citep{kingma2013auto} with an additional penalty term known as total correlation (mutual information) \citep{kraskov2004estimating}, classified as fully disentangled VAEs \citep{higgins2017beta, kim2018disentangling, chen2018isolating}.

However, enforcing full independence among all latent components can be an overly strong assumption for certain datasets. For instance, consider the location coordinates $(x, y)$ of a set of points in a 2D plane. If the points are uniformly distributed within a square $[-1, 1] \times [-1, 1]$, the location distribution can be expressed as $p(x,y) = p(x)p(y)$, indicating that $x$ and $y$ are independent components. However, if the points are distributed in an irregular shape, such as a butterfly, the $(x, y)$ coordinates become entangled, resulting in $p(x, y) \neq p(x)p(y)$. In this case, the location information cannot be decomposed into two independent components but must be jointly represented by $(x,y)$ together. If the points also have attributes independent of their location, such as RGB color represented by a 3D vector, we then encounter the \textbf{group-wise independence}, where a rank-2 entangled group (location) is independent of a rank-3 entangled group (color).

\begin{table}[!ht]
    \centering
    \caption{Different unsupervised disentangling methods. Other related methods are discussed in Appendix.~\ref{appendix:related_works}.}
    \begin{tabular}{lcc}
        \toprule
        Disentanglement type & full & partial \\
        \midrule
        By prior (not flexible) & ICA & ISA-VAE \\
        By penalty (flexible) & \{Factor, $\beta$\}-VAE & \textbf{PDisVAE} \\
        \bottomrule
    \end{tabular}
    \label{tab:different_methods}
\end{table}

To deal with such group-wise independence, we develop the \textbf{partially disentangled VAE (PDisVAE)}.\\
$\bullet$ First, it achieves group-wise independence by generalizing the total correlation (TC) penalty term in the loss function of fully disentangled VAEs to partial correlation (PC), instead of a rigidly defined group-wise independent prior used in ISA-VAE \citep{stuhmer2020independent}. PC explicitly penalizes group-wise independence while permitting within-group entanglement flexibly. This unified formulation of PC encompasses both the standard VAE and fully disentangled VAEs. Tab.~\ref{tab:different_methods} compare these differences. Other related works are summarized in Appendix.~\ref{appendix:related_works}.\\
$\bullet$ Second, we revisit the batch approximation method used for computing PC and TC. The existing batch approximation method proposed by \citet{chen2018isolating} for computing TC in fully disentangled VAEs exhibits a high variance in the estimator. Since accurate batch approximation is critical for the success of the method, we derive the optimal importance sampling (IS) batch approximation formula and provide a theoretical proof of its optimality.

\section{Backgrounds: fully disentangled VAEs}\label{sec:FDisVAE}
\subsection{By total correlation (TC)}
Given a dataset of observations $\cbr{\bm{x}^{(n)}}_{n=1}^N$ consisting of $N$ samples, fully disentangled VAEs aim to identify $K$ statistically independent (disentangled) latent components, $z_1 \perp \dots \perp z_K$, within the latent variable $\bm{z} \in \mathbb{R}^K$ that generate the observation $\bm{x}\in\Rd^D$. To achieve full disentanglement, fully disentangled VAEs optimize:
\begin{equation}\label{eq:FDisVAE}
     \mathcal{L} = \frac{1}{N} \sum_{n=1}^N \ELBO\rbr{\bm x^{(n)}} - \beta \cdot \KL\rbr{q(\bm z) \middle\| \prod_{k=1}^K q(z_k)},
\end{equation}
where $\ELBO(\bm x^{(n)}) = \Ed_{q(\bm z|\bm x^{(n)})}\sbr{\ln p\rbr{\bm x^{(n)}|\bm z}} - \KL\rbr{q\rbr{\bm z\middle|\bm x^{(n)}}\middle\|p(\bm z)}$ \citep{blei2017variational} is the standard VAE loss. In these formulae, $p(\bm x|\bm z;\theta) = \Nc\rbr{\bm x;\bm\mu, \bm\sigma^2},\ \bm\mu, \bm\sigma^2 = \operatorname{decoder}(\bm z;\theta)$, where $\operatorname{decoder}:\Rd^K \to \Rd^D$ is parameterized by $\theta$. $q(\bm z|\bm x;\phi)= \Nc(\bm z;\bm \mu=, \bm\sigma^2),\ \bm\mu, \bm\sigma^2 = \operatorname{encoder}(\bm x;\theta)$ is the variational distribution, in which the $\operatorname{encoder}:\Rd^{D} \to \Rd^K$ is parameterized by $\phi$. In Eq.~\eqref{eq:FDisVAE} and the following, we omit $\theta$ in $p$ and $\phi$ in $q$ for simplification. The prior $p(\bm z)$ is often chosen to be a standard normal prior. The second term in Eq.~\eqref{eq:FDisVAE} is the total correlation (TC), where $q(\bm z) = \frac{1}{N}\sum_{n=1}^N q(\bm z,n) = \sum_{n=1}^N q\rbr{\bm z\middle|\bm x^{(n)}} q(n)$ is the aggregated posterior, followed by \citet{makhzani2015adversarial}. Specifically, given a dataset of $N$ equally treated samples, the probability of taking the $n$-th sample is $q(n) = \frac{1}{N}$, so that $\frac{1}{N} \sum_{n=1}^N [\cdot] = \Ed_{q(n)}[\cdot]$. Also let $q(\bm z|n) \coloneqq q\rbr{\bm z\middle|\bm x^{(n)}}$, then $q(\bm z)$ can be viewed as a Gaussian kernel density estimation from $\cbr{\bm z^{(n)}}_{n=1}^N$ in latent space. The goal of this TC term is to achieve $q(\bm z) =\prod_{k=1}^K q(z_k)$, which is the rigorous definition of independence among $z_1,...,z_k$. That is why Eq.~\eqref{eq:FDisVAE} can achieve full disentanglement compared with standard VAE.

Before the development Eq.~\eqref{eq:FDisVAE}, \citet{higgins2017beta} and \citet{burgess2018understanding} initially discovered that penalizing the entire KL divergence in $\ELBO$ can increase the latent disentanglement, resulting their $\beta$-VAE. It was found later by \citet{kim2018disentangling} and \citet{chen2018isolating} and summarized by \citet{dubois2019dvae} that the effective term for enhancing the latent disentanglement is indeed the TC. Consequently, they developed Eq.~\eqref{eq:FDisVAE} with $\beta > 0$, resulting in FactorVAE and $\beta$-TCVAE representing the fully disentangled VAEs.

\subsection{By a non-Gaussian prior (ICA)}
Another approach to achieving full disentanglement is to view the problem as an independent component analysis (ICA). The core idea inspired by ICA is that ``non-Gaussian is independent'' \citep{hyvarinen2000independent,hyvarinen2009independent}. In short, we need to assume $p(\bm z)$ to be non-Gaussian. The $\operatorname{logcosh}$ distribution is one of the most commonly used:
\begin{equation}\label{eq:logcosh}
    p(\bm z) = \prod_{k=1}^K p(z_k) = \prod_{k=1}^K \frac{\uppi\rbr{\operatorname{sech}\frac{\uppi z_k}{2\sqrt{3}}}^2}{4\sqrt{3}}.
\end{equation}

In traditional linear ICA, $\bm x=\bm f(\bm z)$ where $\bm f:\Rd^K \to \Rd^D$ is a full-rank ($D=K$) linear deterministic mapping, and $p(\bm x|\bm z;\bm f) = \delta(\bm x - \bm f(\bm z))$ ($\delta$ is the Dirac delta function), then we can use maximum likelihood estimate (MLE) to learn $\bm f$ via the ``change of variable'' formula,
\begin{equation}
    p(\bm x) = \int p(\bm x|\bm z;\bm f) p(\bm z)\ \du \bm z = \abs{\det \frac{\du \bm f^{-1}}{\du \bm z}}\cdot p(\bm f^{-1}(\bm x)),
\end{equation}

and recover $\bm z = \bm f^{-1}(\bm x)$. However, there are two main drawbacks. First, it cannot be extended to non-invertible non-linear $\bm f(\bm z)$ since the $\abs{\det \frac{\du \bm f^{-1}}{\du \bm z}}$ in the ``change of variable'' formula becomes intractable \citep{khemakhem2020variational, sorrenson2020disentanglement}. Second, $\bm x\in\Rd^D$ is usually in higher dimensional space than $\bm z\in\Rd^K$ ($D > K$) with noises, which are not explicitly modeled by traditional linear ICA. 

To address these issues, we use a VAE with a logcosh prior $p(\bm{z})$ defined in Eq.~\eqref{eq:logcosh}. It is worth mentioning that, to the best of our knowledge, we are the first to recognize the logcosh-priored VAE as the nonlinear ICA problem. However, certain limitations remain. For instance, if the true number of disentangled latent components is two but we instruct the logcosh-priored VAE to find three, it will yield three components with poor disentanglement instead of finding two disentangled components and one non-informative component. We will discuss this limitation in detail in the experiment section. Additionally, the logcosh-priored VAE cannot be extended to a partially disentangled version, since the logcosh prior does not support partial independence.

\section{Partially disentangled VAE (PDisVAE)}\label{sec:PDisVAE}
\subsection{Problem definition}\label{sec:problem_definition}
Although several approaches have been introduced in Sec.~\ref{sec:FDisVAE}, a common issue among them is they are all trying to find ``fully disentangled (independent)'' latent space. However, if the true latent variables are partially disentangled by groups, applying a fully disentangled method is hard to successfully recover the underlying latent structure accurately.

We first formally define partial disentanglement (independence). Still, assume latent $\bm z\in\Rd^K$, but now the latent dimensions are disentangled by $G$ groups, while each group has its internal within-group rank $H$, satisfying $K = G \times H$. For simplicity, we denote the $g$-th group as $\bm z_g = (z_{(g-1)H+1}, \dots, z_{gH})$, so that $\bm z = (\bm z_1, \dots, \bm z_G)$. Then, the \textbf{partially disentangled} latent can be formulated as
\begin{equation}\label{eq:partial_disentanglement}
    \bigperp_{g=1}^G (z_{(g-1)H + 1}, \dots, z_{gH}).
\end{equation}
This equation expresses that within each group, latent components may exhibit dependencies and may not be further disentangled. However, the groups themselves remain independent of each other. We refer to this as \textbf{group-wise independence}. For example, when $K = 6$ and there are $G=3$ groups, the three groups are independent of each other as $(z_1, z_2) \perp (z_3, z_4) \perp (z_5, z_6) \iff p(z_1,\dots,z_6) = p(z_1,z_2)p(z_3,z_4)p(z_5,z_6)$, while dimensions within each group can be highly dependent and might not be further decomposed, i.e., $p(z_1,z_2)\neq p(z_1)p(z_2),\ p(z_3,z_4)\neq p(z_3)p(z_4),\ p(z_5, z_6) \neq p(z_5)p(z_6)$.

To identify partially independent component groups as defined above, one might consider a straightforward approach: using existing methods to impose marginal independence on between-group components. For instance, if we have $(z_1, z_2) \perp z_3$, one might attempt to apply existing algorithms to require $z_1 \perp z_3$ and $z_2 \perp z_3$. However, this is generally NOT correct since the former is a sufficient but not necessary condition ($\implies$) for the latter. A simple counterexample is $p(z_1, z_2, z_3)$ with $p(0, 0, 1) = p(0, 1, 0) = p(1, 0, 0) = p(1, 1, 1) = 0.25$. It can be verified that $(z_1, z_2) \not\perp z_3$, while $z_1 \perp z_3$ and $z_2 \perp z_3$. More detailed explanations are in Appendix~\ref{appendix:marginal_independence}. Therefore, we must explicitly enforce $(z_1, z_2) \perp z_3$.

\subsection{By the $L^p$-nested prior (ISA-VAE)}
To explicitly require group-wise independence, there are still two ways---by a group-wise independent prior or by an extra penalty term to the loss function (see Tab.~\ref{tab:different_methods}). \citet{stuhmer2020independent} extends the ISA-VAE from ICA that utilizes the $L^p$-nested distribution \citep{fernandez1995modeling}
\begin{equation}\label{eq:ISA}
    p(\bm z) = \frac{\psi_0(g(\bm z))}{g(\bm z)^{n-1} \mathcal{S}_g(1)}
\end{equation}
as a group-wise independent prior to achieve the partial disentanglement, where $g$ is an $L^p$-nested function, $\psi_0:\Rd\to\Rd_+$ is the raidal density, and $\mathcal{S}_g(1)$ is the surface area of the $L^0$ nested sphere. More details regarding this approach can be found in the work series of \citet{stuhmer2020independent}, \citet{fernandez1995modeling}, and \citet{sinz2010lp}. However, this approach still needs further investigation. First, synthetic experiments are crucial to validate that a partial disentanglement method can effectively handle group-wise independent ground truth, while it was not conducted in the ISA-VAE paper. Second, similar to fully independent ICA, relying on a predefined prior to achieve group-wise independence might be overly rigid in some cases, as will be illustrated in later sections.

\subsection{By partial correlation (PC)}
To require group-wise independence more flexibly, instead of using a prior, we develop the \textbf{partially disentangled VAE (PDisVAE)} that achieves the group-wise independence by an extra penalty term to the loss. Its target function
\begin{equation}\label{eq:PDisVAE}
     \mathcal{L} = \frac{1}{N} \sum_{n=1}^N \ELBO\rbr{\bm x^{(n)}} - \beta \cdot \KL\rbr{q(\bm z) \middle\| \prod_{g=1}^G q(\bm z_g)}
\end{equation}
replaces the TC term in Eq.~\eqref{eq:FDisVAE} with a partial correlation (PC) term. PC is responsible for disentangling independent groups. When $q(\bm z) = \prod_{g=1}^G q(\bm z_g)$, $\operatorname{PC} = \KL\rbr{q(\bm z) \middle\| \prod_{g=1}^G q(\bm z_g)} = 0$. Otherwise, $\operatorname{PC} > 0$ and is penalized by the hyperparameter $\beta > 0$.

It is worth noting that when $G = 1$, $\operatorname{PC} \equiv 0$ and Eq.~\eqref{eq:PDisVAE} becomes the standard VAE objective function; when $G = K$, PC is just the total correlation (TC) and Eq.~\eqref{eq:PDisVAE} becomes Eq.~\eqref{eq:FDisVAE}, the fully disentangled VAE loss. Compared with ISA-VAE \citep{stuhmer2020independent}, which relies on a predefined group-wise independent prior, utilizing PC to achieve group-wise independence offers greater flexibility by allowing the within-group disentanglement rank to vary, rather than being fixed to a specific rank $H$ in ISA-VAE. This flexibility and effectiveness of our PDisVAE leveraging the PC penalty term will be demonstrated in the next subsection and validated through experiments.

\subsection{The behavior of PDisVAE}
\begin{figure*}[!ht]
    \centering
    \includegraphics[width=\linewidth]{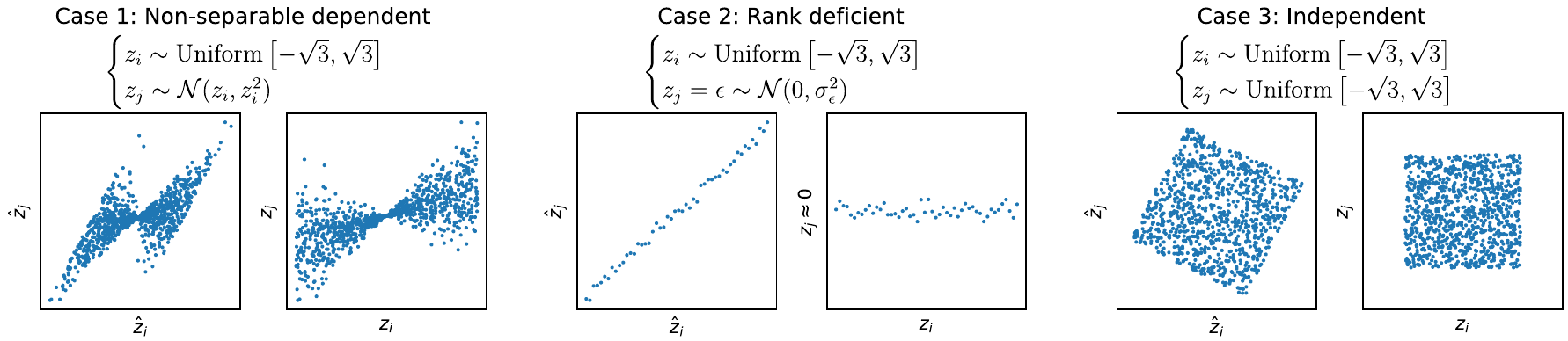}
    \vspace{-0.3in}
    \caption{Visual illustrations for the desired behavior of the PDisVAE. In each case, the left plot is the estimated latent $(\hat z_i, \hat z_j)$ and the right plot is the true latent $(z_i, z_j)$.}
    \label{fig:illustration}
\end{figure*}
In the previous subsection, we introduced PDisVAE but did not discuss what to expect within the groups discovered by PDisVAE. Here, we will outline three potential relationships that the latent components within a group could exhibit. To illustrate, let us consider a discovered latent pair $(\hat z_i, \hat z_j)$; the three cases of interest are illustrated in Fig.~\ref{fig:illustration}.\\
$\bullet$ \textbf{Case 1: Non-separable dependent.} Consider we have the true latent $(z_i, z_j)$ from the equations shown in the right plot of case 1, where both the mean and variance of the Gaussian $z_j$ are dependent on $z_i$. This makes $z_i$ and $z_j$ highly entangled with each other in one group and it is impossible to further separate them independently by any linear transformation. Then, PDisVAE should identify a group $(\hat{z}_i, \hat{z}_j)$ that cannot be further separated independently through any linear transformation. Furthermore, we should be able to align the estimated $(\hat{z}_i, \hat{z}_j)$ with the true $(z_i, z_j)$ via a linear transformation. In this case, the within-group TC cannot become zero under any linear transformation.\\
$\bullet$ \textbf{Case 2: Rank-deficient.} Consider that PDisVAE has identified an estimated group $(\hat{z}_i, \hat{z}_j)$ in the left plot of case 2. Although they are dependent, they exhibit a clear linear relationship, which means they can be reduced to a single effective component, $z_i$, while $z_j$ serves as a dummy latent component. For example, if we have three latent components such that $(z_1, z_2) \perp z_3$, and we apply PDisVAE with $K = 4 = G \times H = 2 \times 2$, we would expect to find a dummy component $z_4 \approx 0$ in the second group, resulting in $(z_1, z_2) \perp (z_3, z_4 \approx 0)$. To verify the presence of a dummy latent, one could apply principal component analysis (PCA) to the group and identify a significantly small principal component, or conduct a normality test to detect Gaussian noise. Note that ISA-VAE is too rigid to allow rank deficiency within a group, since the dummy Gaussian noise variable conflicts with the predefined prior in Eq.~\eqref{eq:ISA}.\\
$\bullet$ \textbf{Case 3: Independent.} In this example, $\hat{z}_i$ and $\hat{z}_j$ are irreducibly dependent on each other. However, it is possible to further separate them into independent components via a linear transformation, resulting in the right plot that $z_i$ and $z_j$ become uniform distributions independent of each other. Consequently, $\hat{z}_i$ and $\hat{z}_j$ identified by PDisVAE should be allocated to two different groups rather than the same group. In this case, the within-group TC can be reduced to zero after a particular linear transformation. This indicates that as long as PDisVAE accurately identifies enough independent groups, the latent components within each group should not be independent of one another.

\subsection{Batch approximation}
During training, strictly computing the aggregated marginal/group posterior of the form $q(z) = \sum_{n=1}^N q(z|n) q(n) = \frac{1}{N}\sum_{n=1}^N q(z|n)$ might be unfeasible, since we only have a batch of size $M$, denoted as $\Bc_M \coloneqq \cbr{n_1, n_2, \dots, n_M}$ without replacement. Although \citet{chen2018isolating} proposed minibatch weighted sampling (MWS) and minibatch stratified sampling (MSS), we argue that our \textbf{importance sampling (IS)} method, derived below and compared in Tab.~\ref{tab:batch_approximation}, is more effective.

Intuitively, when we only have a batch $\Bc_M \subsetneqq \cbr{1,\dots,N}$ and a sampled $z\sim q(z|n_*)$, where $n_*$ is a specific example point in $\Bc_M$, $q(z|n_*)$ is more likely to be greater than $q(z|n\neq n_*)$ since $z$ is sampled from $q(z|n_*)$. Therefore, we want the remaining $M-1$ points in $\Bc_M \backslash \cbr{n_*}$ to represents the entire dataset excluding $n_*$, i.e., $\cbr{1,2,\dots, N}\backslash \cbr{n_*}$. Hence, an approximation of $q(z)$ at $z\sim q(z|n_*)$ could be
\begin{equation}\label{eq:IS}
    \hat q(z) = \frac{1}{N} q(z|n_*) + \sum_{n\in(\Bc_M\backslash\cbr{n_*})} \frac{N-1}{M-1}\frac{1}{N} q(z|n).
\end{equation}
Since each $q(z)$ is approximated using data points within a batch, it might be beneficial to shuffle the dataset every epoch to change the batch samples. Appendix.~\ref{appendix:batch_approximation} includes the complete derivation of this approximation, explaining why it is called IS approximation and proving its optimality, and an empirical evaluation of the three estimators. Notably, IS is more stable than MSS, since $\Var[\mathrm{IS}]<\Var[\mathrm{MSS}]$.

\begin{table}[htbp]
    \centering
    \caption{Comparison of three batch approximation approaches. See Appendix.~\ref{appendix:batch_approximation} for more details.}
    \label{tab:batch_approximation}
    \begin{tabular}{lcc}
        \toprule
         & mean & variance \\
        \midrule
        MWS & biased & \\
        MSS & unbiased & $\Var[\mathrm{MSS}] = \Var[\mathrm{IS}] + \frac{M-2}{M(M-1)}$ \\
        \textbf{IS} & unbiased & $\Var[\mathrm{IS}] = \frac{(N-M)^2}{M^2(M-1)}$ \\
        \bottomrule
    \end{tabular}
\end{table}

\begin{figure*}[!ht]
    \centering
    \includegraphics[width=\linewidth]{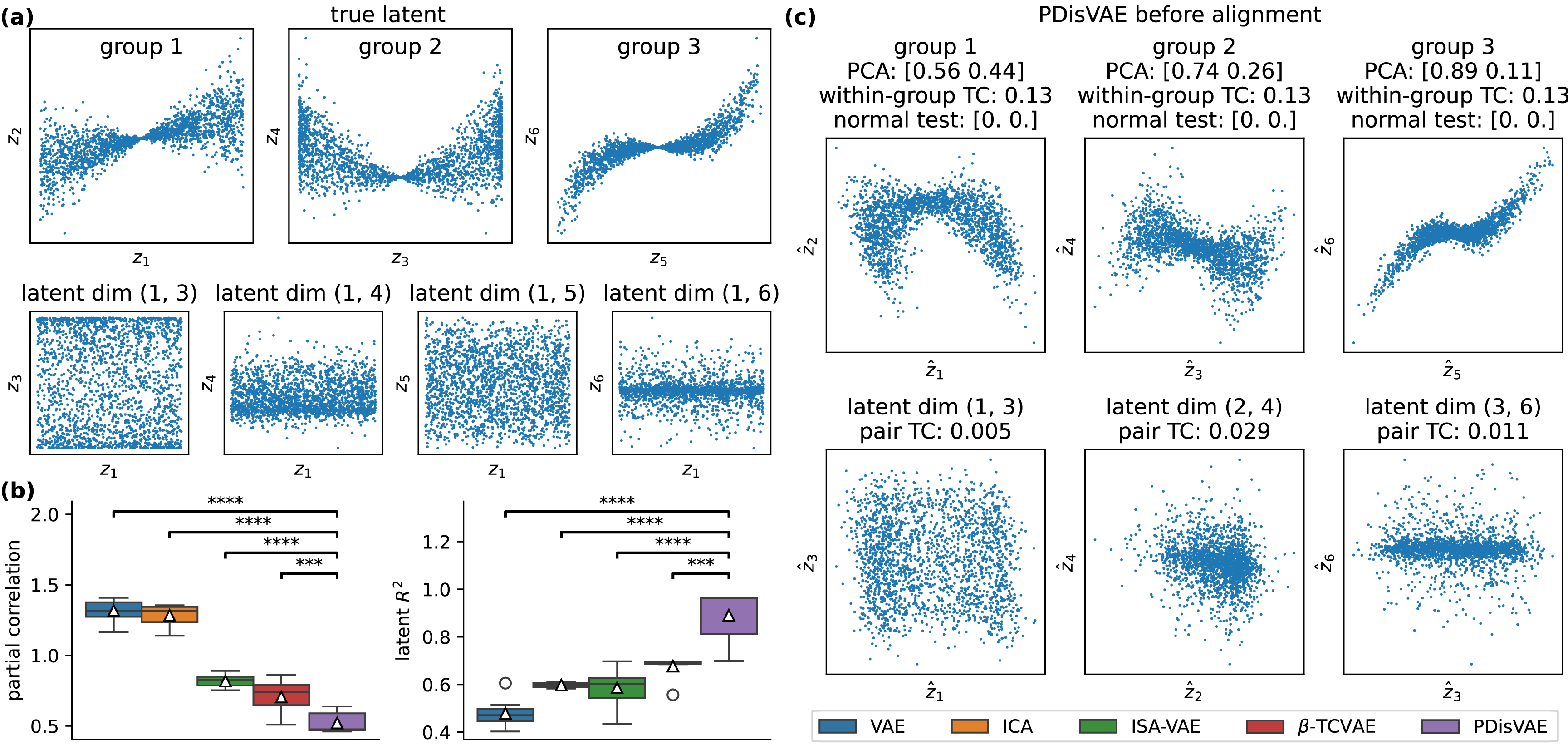}
    \vspace{-0.3in}
    \caption{\textbf{(a)}: The true latent $\bm z \in \Rd^6$ where three groups are $(z_1,z_2)\perp(z_3,z_4)\perp(z_5,z_6)$, but within-groups are highly entangled (top row). Latent components in different groups are marginally independent (bottom row). \textbf{(b)}: The PC of the estimated latent and the latent $R^2$ after alignment to the true latent in (a), with pair-wise $t$-test showing the significance level (***: $p\leqslant 0.001$, ****: $p\leqslant 0.0001$). \textbf{(c)}: The estimated latent of PDisVAE before aligning to the true latent in (a). In each group, PCA shows the explained variance ratio in the group. Within-group TC shows the minimum TC under all possible linear transformations. The normal test shows the $p$-values of the null hypothesis that a marginal distribution is a normal distribution. If $p>0.05$ for example, we may accept the null hypothesis that there exists a Gaussian noise dummy latent component. The pair TC is directly measured from the components in different groups.}
    \label{fig:synthetic_weak_box}
\end{figure*}

\section{Experiments}\label{sec:experiments}
\paragraph{Methods for comparison.} For evaluating the developed PDisVAE, we compare the following methods:\\
$\bullet$ Standard \textbf{VAE} \citep{kingma2013auto}: Theoretically, standard VAE does not have disentaglement ability.\\
$\bullet$ \textbf{ICA}: This is the logcosh-priored VAE for doing non-linear generative ICA inspired by \citet{hyvarinen2000independent}.\\
$\bullet$ \textbf{ISA-VAE} \citep{stuhmer2020independent}: This is the VAE that using the $L^p$-nested prior to achieve group-wise independence.
$\bullet$ \textbf{$\beta$-TCVAE} \citep{chen2018isolating}: This method penalizes an extra TC term to achieve full disentanglement. It is theoretically equivalent to FactorVAE \citep{kim2018disentangling}.\\
$\bullet$ \textbf{PDisVAE}: Our method penalizes the PC term to achieve partial disentanglement, providing a flexible approach to group-wise independent latent. It reduces to the standard VAE when the number of groups $G=1$; and reduces to the fully disentangled VAE when $G=K$ (i.e., the number of groups equals the latent dimensionality). Additionally, it inherently supports within-group rank deficiency.

We will first rigorously validate PDisVAE on two synthetic datasets, then apply it to pdsprites, face images (CelebA), and neural data.

% We will begin by using two synthetic datasets to understand and rigorously validate the PDisVAE. Then, we will apply these methods to pdsprites, face images (CelebA), and neural data.

\subsection{Synthetic validation: group-wise independent}
\paragraph{Dataset.} To validate that only PDisVAE is capable of dealing with group-wise independent datasets, we create a dataset consisting of $N=2000$ points in $K=6$ latent space $\bm z^{(n)} \in \Rd^6$, where three groups are independent of each other $(z_1,z_2)\perp(z_3,z_4)\perp(z_5,z_6)$, but components within each group are highly entangled (Fig.~\ref{fig:synthetic_weak_box}(a)) The observations $\bm x$ are linearly mapped from the latents $\bm z$ to a $D = 20$ dimensional space $\bm x^{(n)} \in \Rd^{20}$, and then Gaussian noise $\epsilon_d^{(n)} \overset{i.i.d.}{\sim} \Nc\rbr{0,0.5^2}$ is added.

\paragraph{Experimental setup.} For each method, we use Adam \citep{kingma2014adam} to train a linear encoder and a linear decoder (since the true generative process is linear) for 5,000 epochs. The learning rate is $5\times 10^{-4}$ and the batch size is 128. For $\beta$-TCVAE and PDisVAE, the TC/PC penalty is set as $\beta = 4$. This is supported by \citet{dubois2019dvae}, the $\beta$ selection in $\beta$-TCVAE \citep{chen2018isolating}, and our cross-validation result (Fig.~\ref{fig:synthetic_weak_ablation}) in the ablation study. Each method is run $10$ times with different random seeds.

\paragraph{Results.} The PC box plot in Fig.~\ref{fig:synthetic_weak_box}(b) shows that PDisVAE achieves the lowest PC, implying that PDisVAE disentangles latent in groups the best. Since this is the synthetic dataset and a model match experiment, we can align the estimated latent groups to their corresponding true latent groups to further validate the correctness of the latent estimation. The reconstruction $R^2$ of all methods is approximately $0.97$, indicating that all methods can reconstruct the observation perfectly. However, their learned latent representations are different. The latent $R^2$ in Fig.~\ref{fig:synthetic_weak_box}(b) shows that PDisVAE recovers the latent more accurately than others. Among the alternatives, $\beta$-TCVAE is better than ISA-VAE, ICA, and VAE. It is worth noting that although ISA-VAE is designed to find group-wise independent latent, its performance is not ideal when facing data generated from group-wise independent ground truth latent in practice. Fig.~\ref{fig:synthetic_weak_latent} also visually shows that after aligning with the true latent, PDisVAE recovers the latent the best. 

\begin{figure}[!ht]
    \centering
    \includegraphics[width=\linewidth]{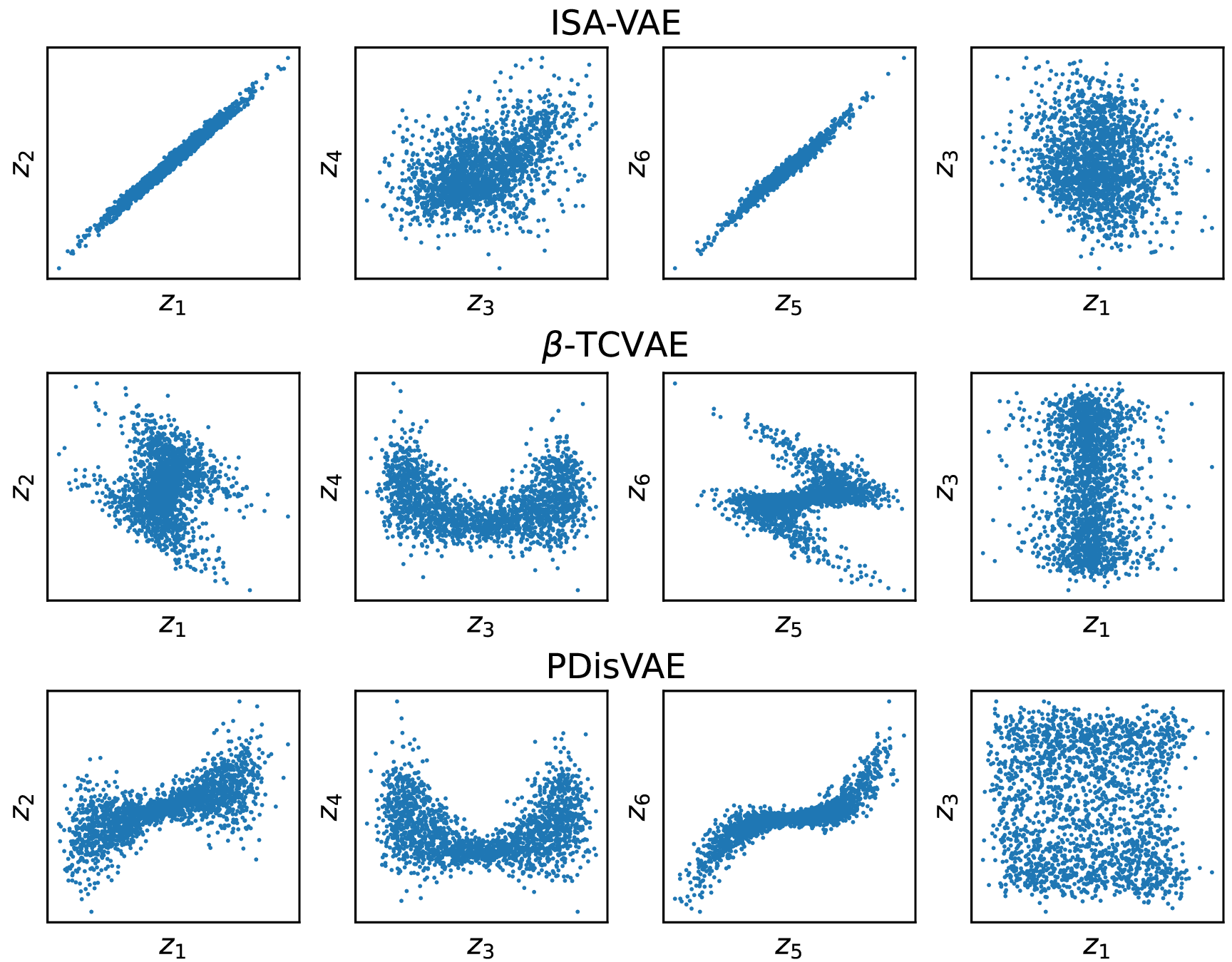}
    \vspace{-0.3in}
    \caption{Estimated latent after aligning to the true latent (Fig.\ref{fig:synthetic_weak_box}(a)) for various methods. Left three columns: the three independent groups; right one column: a between-group component pair. VAE and ICA results are in Fig.~\ref{fig:synthetic_weak_latent_all} in Appendix.~\ref{appendix:supplimentary_results}.}
    \label{fig:synthetic_weak_latent}
\end{figure}

An immediate question that arises is, how to check within-group latent estimated by PDisVAE is truly highly entangled and cannot be further decomposed, especially when there is no true latent. Essentially we hope to find case 1 within a group, rather than case 2 or case 3 illustrated in Fig.~\ref{fig:illustration}. The minimum within-group TC shown in Fig.~\ref{fig:synthetic_weak_box}(c) are all greater than $0$, which means we indeed find highly entangled groups that cannot be further decomposed. Compared to the minimum within-group TC, the close-to-zero pair TC between groups also indicates that components between groups are independent.

\begin{figure*}[!ht]
    \centering
    \includegraphics[width=\linewidth]{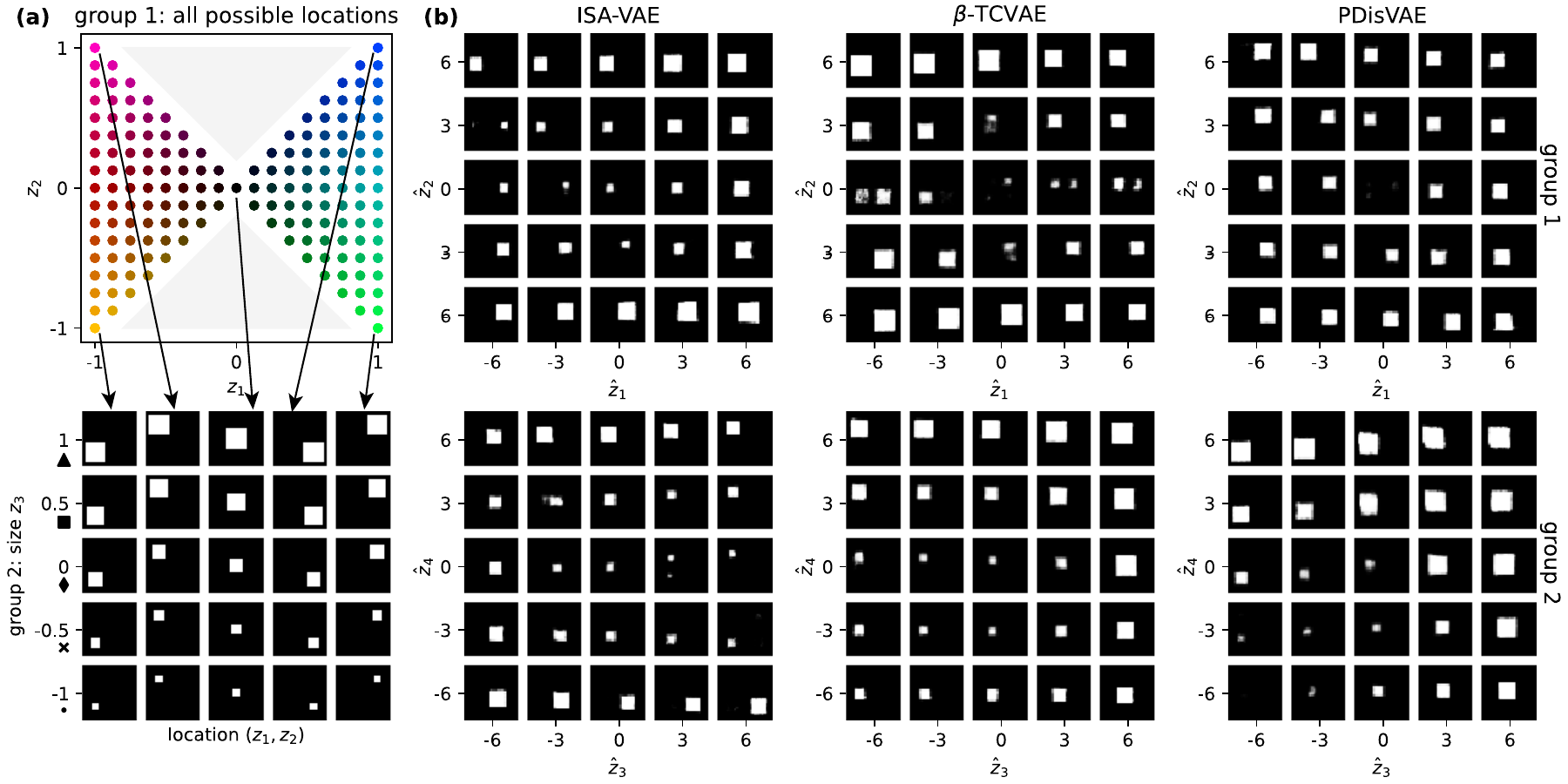}
    \vspace{-0.3in}
    \caption{\textbf{(a)}: Latent and observation generating process. Locations $(z_1, z_2)$ are entangled, and uniformly distributed in a restricted region. Color represents the location information, with the upper and lower gray triangular areas being empty. The size $z_3$ is evenly distributed across five scales, represented by different markers, and is independent of the location. \textbf{(b)}: The reconstructed images by varying one of the latent groups ($(\hat z_1, \hat z_2)$ or $(\hat z_3, \hat z_4)$) found by $\beta$-TCVAE and PDisVAE.}
    \label{fig:pdsprites}
    % \vspace{-0.2in}
\end{figure*}

\begin{figure*}[!ht]
    \centering
    \includegraphics[width=\linewidth]{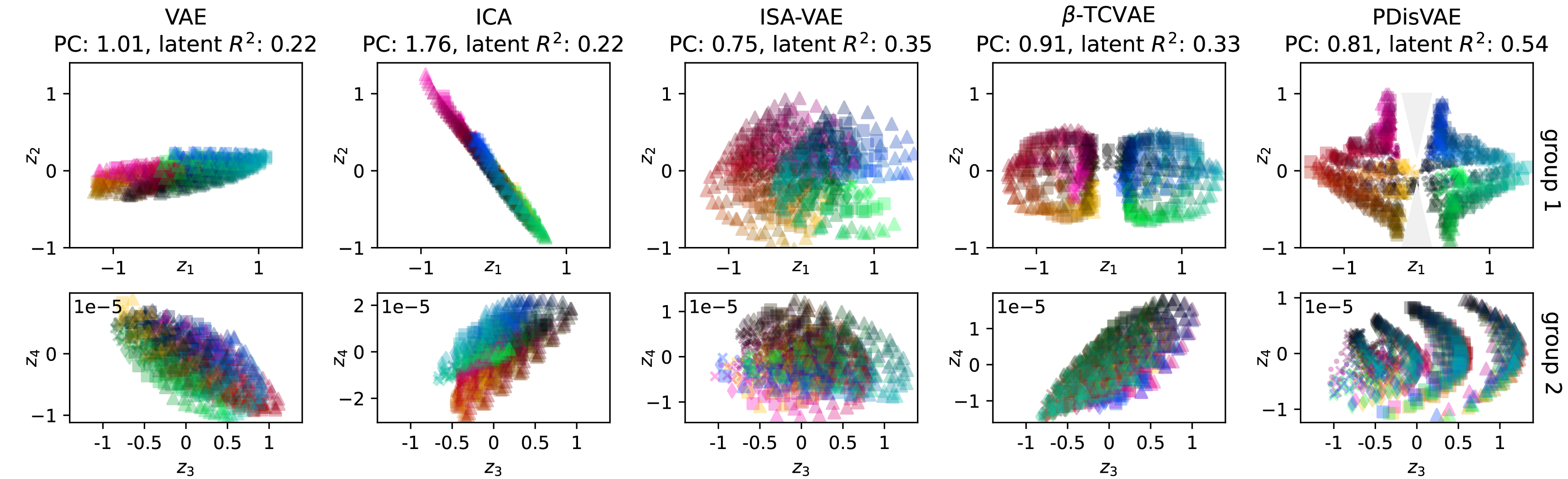}
    \vspace{-0.3in}
    \caption{The latent plot after alignment for the group 1 $(z_1, z_2)$ and group 2 $(z_3, z_4 \approx 0)$ from different methods, and their corresponding PC and latent $R^2$. The color representation for location is the same as the color representation in Fig.~\ref{fig:pdsprites}(a), and the marker of the point in the latent plots represents the size of the square in the observation images.}
    \label{fig:pdsprites_latent_within_group}
\end{figure*}

\paragraph{Ablation.} To analyze the choice of the penalty coefficient $\beta$ of PC term in Eq.~\eqref{eq:PDisVAE}, we vary $\beta$ in PDisVAE from $0.1$ to $100$ and plot the cross-validation results in Fig.~\ref{fig:synthetic_weak_ablation}. The PC and latent $R^2$ plots indicate that $\beta > 1$ is necessary for an accurate recovery and effective minimization of the PC. However, excessively large $\beta$ might negatively impact reconstruction, as shown in the reconstruction $R^2$ plot. Hence, we recommend $\beta\in(2, 10)$, which supports our choice of $\beta = 4$ in our experiments.

\begin{figure}[H]
    \centering
    \includegraphics[width=\linewidth]{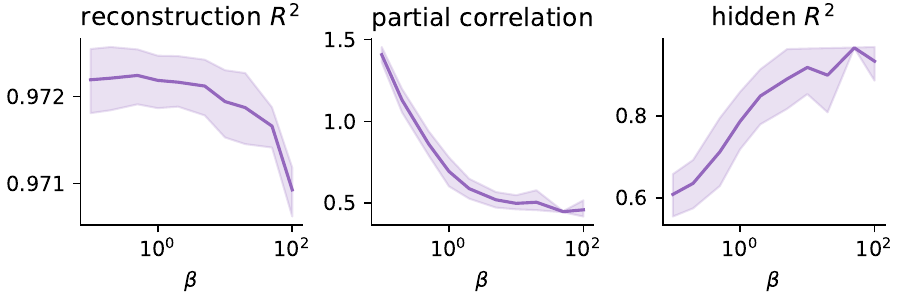}
    \vspace{-0.3in}
    \caption{
    Three metrics w.r.t. the PC coefficient $\beta$ in PDisVAE.}
    \label{fig:synthetic_weak_ablation}
\end{figure}

\paragraph{Flexibly reduce to the fully independent case.} To validate that PDisVAE can flexibly get the same results as from a fully disentangled VAE when the latent is fully independent, we create a dataset that is generated from fully independent latent (Fig.~\ref{fig:synthetic_strong_box}(a) and Fig.~\ref{fig:synthetic_strong_latent}) and apply different methods to it. The PC box plot and latent $R^2$ plot in Fig.~\ref{fig:synthetic_strong_box}(b) show that both $\beta$-TCVAE and PDisVAE achieve the lowest partial correlation and the highest latent $R^2$ on this fully disentangled dataset, which implies that PDisVAE automatically reduces to fully independent result if the group rank is deficient, as illustrated in case 2 in Fig.~\ref{fig:illustration}. In general, the actual group rank can be detected by PDisVAE and if the true group rank is less than the specified group dimensionality, dummy estimated latents will complemented in the corresponding group. More details are in Appendix.~\ref{Appendix:fully_independent}.

\begin{figure*}[!ht]
    \centering
    \includegraphics[width=\linewidth]{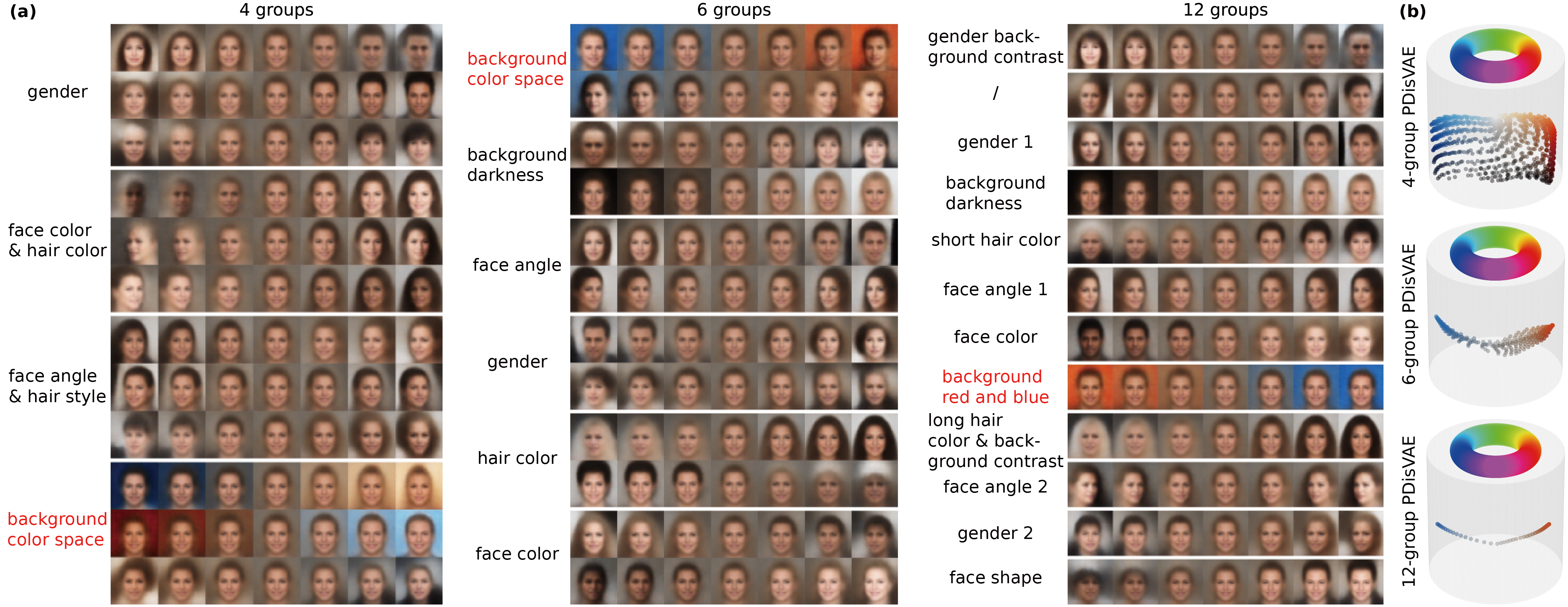}
    \vspace{-0.3in}
    \caption{\textbf{(a)}: Reconstructed images are shown by varying one of the $K=12$ latent dimensions from PDisVAE applied to the CelebA dataset, with different numbers of groups $G\in\{4, 6, 12\}$. Each row corresponds to varying one latent component (dimension) while fixing all others to 0s. \textbf{(b)} The spanned color space by the red-annotated color group in the $\cbr{4, 6, 12}$-group PDisVAE.}
    \label{fig:celeba}
\end{figure*}

\subsection{Synthetic application: partial dsprites}\label{sec:pdsprites}
\paragraph{Dataset.} To understand the application scenario of PDisVAE, we created a synthetic dataset called partial dsprites (pdsprites), inspired by \citet{dsprites17}. Unlike the original dsprites, which features six fully independent latent dimensions, we only keep three latent components: $x$-location ($z_1$), $y$-location ($z_2$), and size ($z_3$), where $x$ and $y$ locations are entangled (not independent) with each other while this group is independent to the size, i.e., $(z_1, z_2) \perp z_3$. The generating process is depicted in Fig.~\ref{fig:pdsprites}(a), resulting in 805 gray-scaled images of shape $32 \times 32$.

\noindent\textbf{Experimental setup.} For each method, we use Adam to train a deep CNN VAE \citep{burgess2018understanding} for 5,000 epochs with a learning rate of $1\times 10^{-3}$. For $\beta$-TCVAE and PDisVAE, the TC/PC coefficient is set as $\beta = 4$. Given the true latent is $(z_1, z_2) \perp z_3$, learning two rank-2 groups ($K = 4 = G \times H = 2 \times 2$) should be able to find one group representing the location of the square and another rank-deficient group (contains a dummy latent component) representing the size of the square. Note that this setup is a model mismatch case, as we do not know the exact observation generating function $\bm f$; we only understand the semantic relationship between $\bm z$ and $\bm x$.

\paragraph{Results.} Fig.~\ref{fig:pdsprites_latent_within_group} shows the estimated latent from all methods after alignment. PDisVAE has the highest latent $R^2$ and the second lowest PC. Notably, PDisVAE successfully discovers two empty areas in the upper and lower gray triangular regions in group 1, reflecting the true latent distribution depicted in Fig.~\ref{fig:pdsprites}(a). Additionally, PDisVAE captures leveled size scales in $z_3$, showing smaller sizes for smaller $z_3$ and larger sizes for larger $z_3$, making it the closest representation of the true $z_3$ compared to other methods. Appendix.~\ref{appendix:pdsprites} contains more plots and quantitative comparisons.

Fig.~\ref{fig:pdsprites}(b) shows the reconstructed images by varying each of the two groups found by $\beta$-BTCVAE and PDisVAE, respectively. Group 1 from PDisVAE represents the location, with an empty center due to fewer observation samples in that area (see the region around $(z_1, z_2) = (0, 0)$ in Fig.~\ref{fig:pdsprites}(a)). Besides, the square is expected to not appear in the top middle or bottom middle of the image, since there is no observation in the dataset that appears in those regions. The size is embedded in group 2, roughly along the $\hat z_4$ direction. In contrast, $\beta$-TCVAE mixes size and location in both groups because it enforces independence across all four components, which is incompatible with the fact that two location components are entangled together and independent of the third size component.

\subsection{Real-world applications}\label{sec:real_world}
To evaluate the performance and flexibility of PDisVAE in real-world applications, we train it on two real-world datasets, described in the following paragraphs. Since the true latent structure is unknown in these cases, we experiment with different group configurations for PDisVAE. Note that when $G=1$, $\operatorname{PC}\equiv 0$ and PDisVAE reduces to the standard VAE, and when $G=K$, PDisVAE reduces to the fully disentangled VAE, e.g., $\beta$-TCVAE or FactorVAE.

\begin{figure*}[!ht]
    \centering
    \includegraphics[width=\linewidth]{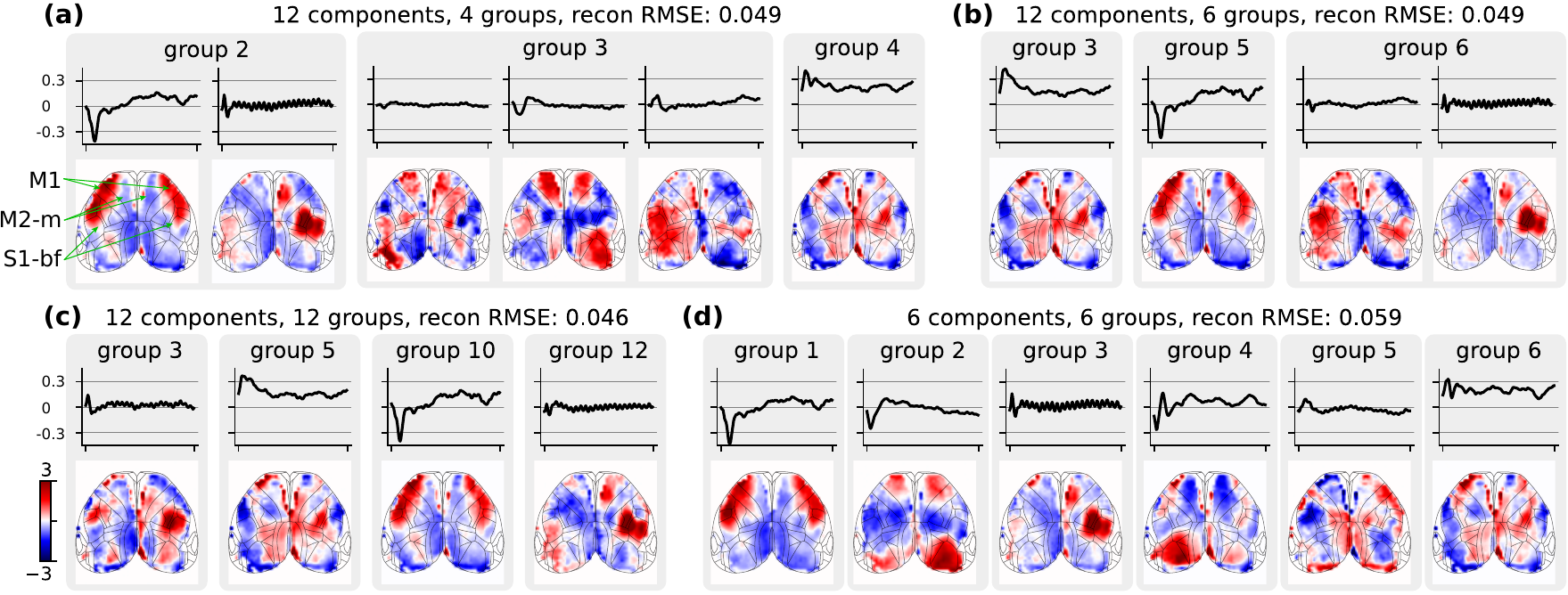}
    \vspace{-0.3in}
    \caption{Brain maps $\cbr{\bm z_g^n}_{n=1}^{50\times 50}$ and the corresponding time series $\bm A_{:, g}$ from the learned groups by different PDisVAE configurations $(K,G)$, i.e., $K$ components, $G$ groups, and the group rank is $H = K/G$. Some groups contain dummy dimensions, so the effective group rank is lower than the specified group rank, and hence we only show those effective components.}
    \label{fig:voltage}
\end{figure*}

\paragraph{CelebaA.} The dataset contains 202,599 face images \citep{liu2015faceattributes}, cropped and rescaled to $(3, 64, 64)$. The encoder and decoder are deep CNN-based image-nets \citep{burgess2018understanding}. We fix the latent dimensionality $K=12$ and vary the number of groups $G\in\cbr{1,2,3,4,6,12}$. Training settings are similar to the previous experiments.

Fig.~\ref{fig:celeba}(a) shows the reconstructed images by varying each of the $K=12$ components while fixing others as zero, for $G\in\cbr{4,6,12}$. The group meanings are annotated on the left. Particularly, with 4 or 6 groups, some attributes are represented by a group of higher rank rather than a single latent component, such as background color. Certain attributes are dependent on each other represented by a group, like the face color \& hair color in the $G=4$ setting. These important interpretations are harder to find by the fully disentangled $G=12$ setting. Besides, fully disentangled VAE may fail to ensure perfect independence if the component setting and the true latent factor are largely mismatched (which is also hard to determine), like gender 1 and gender 2 in the $G=12$ setting.

To understand how one semantic attribute is represented by multiple components within a group, we use background color as an example. The $G=12$ groups setting in Fig.~\ref{fig:celeba}(a) shows that the background color is represented by a single component, which restricts the expression to a 1D color manifold as shown in $G=12$ HSV cylinder in Fig.~\ref{fig:celeba}(b), which is not reasonable. With multiple latent components in a group representing background color, the background color can be expressed in 2D or 3D color manifolds as shown in $G=6$ and $G=4$ HSV cylinders, offering a more expressive and realistic representation. Results from all group settings are displayed in Fig.~\ref{fig:celeba_all} in Appendix.~\ref{appendix:supplimentary_results}.

\paragraph{Mouse dorsal cortex voltage imaging.} The dataset used in this study is a trial-averaged voltage imaging (method by \cite{lu2023widefield}) sequence from a mouse collected by us. It comprises 150 frames of $50 \times 50$ dorsal cortex voltage images, recorded while the mouse was subjected to a left-side air puff stimulus lasting 0.75 seconds. Each pixel is treated as a sample, and a linear model $\bm{x} \sim \Nc(\bm{A}\bm{z}, \bm{\sigma}^2\bm{I})$ is learned. We investigate different numbers of groups $G \in \{1, 2, 3, 4, 6, 12\}$ while keeping the number of components constant at $K = 12$. Additionally, we explore fully disentangled models by varying $K \in \{1, 2, 3, 4, 6, 12\}$ with $G = K$. The training procedures are similar to the previous experiments (see code for details).

Figure~\ref{fig:voltage} shows the brain maps and corresponding time series learned from various PDisVAE configurations $(K, G)$. Learning $K=12$ components with different $G$ groups (Fig.~\ref{fig:voltage}(a,b,c)) yields similar reconstruction RMSEs ($\approx 0.47$), but results in different latent representations.  Assuming $G=12$ as a fully disentangled model (Fig.~\ref{fig:voltage}(c)) is overly restrictive, as both group 3 and group 12 contain oscillations in the right primary somatosensory cortex-barrel field (S1-bf) and secondary motor cortex-medial (M2-m), demonstrating a lack of independence between these components. This configuration implies that there are not 12 independent components within this neural data. Conversely, assuming $G=4$ groups (Fig.~\ref{fig:voltage}(a)) is insufficient, as group 2 mixes not only the oscillatory signals right S1-bf and M2-m but also signals from other regions like the right primary motor cortex (M1). This implies a failure to capture the complete scope of independence in the data. A $G=6$ grouping (Fig.~\ref{fig:voltage}(b)) presents a more balanced approach. This model consists of six independent groups, each expressed by two latent components. Specifically, group 3’s S1-bf and M2-m remain active, indicating these areas are stimulated during the air puff; group 6 is primarily responsible for the oscillations in S1-bf and M2-m, with minimal interference from the M1 signal. Moreover, the brain maps in group 2 from the 4-group configuration are effectively delineated into groups 5 and 6 in the 6-group configuration, further affirming the relative independence of M1 from S1-bf and M2-m during stimulus exposure. The fully independent model with $(K, G) = (6, 6)$ (Fig.~\ref{fig:voltage}(d)) indicates that two components per group are necessary for accurate reconstruction. Specifically, having only one component per group is insufficient to reconstruct the raw video, as the RMSE for $(6, 6)$ is 0.059, which is significantly higher than the 0.049 RMSE for $(12, 6)$. The group reconstruction videos in the supplementary materials offer a more intuitive illustration of the full contribution of each group.

\section{Discussion}
In this work, we develop the partially disentangled variational auto-encoder (PDisVAE), a more flexible approach to handling group independence (partial disentanglement) in data, which is often a more realistic assumption than full independence (fully disentanglement) in a lot of applications.

\subsection{More discussions about interpreting semantic vs. statistical independence in practical applications}

In a lot of practical applications, we need to differentiate two concepts: semantic meaning vs. statistically independent group. It is possible that an independent group contains more than one semantic meaning. In the CelabA dataset, for example, it is likely that females have more warm backgrounds and males have more cold backgrounds. In this case, the background warm/cold is entangled with gender. In this case, we cannot separate these two semantic meanings since they are statistically dependent/entangled.

In our example of background color, especially Fig.~\ref{fig:celeba}(b), we interpret a group as background color based on our human understanding. However, we cannot rigorously prove that the background color is totally independent of the tiny facial feature changes. This is actually an important point we want to stress in this paper, like in Sec.~1 paragraph 2, Fig.~\ref{fig:pdsprites}(a), and Fig.~\ref{fig:celeba}(b). We can summarize the following four possibilities:\\
$\bullet$ one semantic meaning corresponds to one latent component (fully independent);\\
$\bullet$ one semantic meaning corresponds to several entangled latent components (a latent group);\\
$\bullet$ several semantic meanings correspond to one latent component (semantic meanings are entangled and encoded by one latent component);\\
$\bullet$ several semantic meanings correspond to several latent components (semantic meanings are entangled and encoded by several latent components).\\
This is the key reason we generalize fully disentangled VAE to partially disentangled VAE (PDisVAE) since PDisVAE considers all these possibilities that exist in nearly all real-world datasets (maybe with the probability of 1). We view this as our paper's key take-home message that we really need to jump out of the stereotype that one latent component should correspond to one semantic meaning.

For example, in the partial dsprites (pdsprites) dataset shown in Fig.~\ref{fig:pdsprites}(a), although we humans think $x$ location and $y$ location are two separable semantic meanings, they are statistically dependent/entangled with each other, so we cannot separate them but put them in one group, and that is why fully disentangled VAEs (e.g., $\beta$-TCVAE) fails with this dataset (Fig.~\ref{fig:pdsprites}(b)). We can think $x$ and $y$ as two semantic meanings or say $(x,y)$ "location" is one semantic meaning, but the ground truth is that $x$ location and $y$ location are entangled, not statistically separable, and hence should be encoded by a latent group of at least rank-2.

A similar reason also holds for the color distribution we plot in Fig.~\ref{fig:celeba}(b). If we use a fully disentangled VAE, we can only interpret that the background color (from red to blue, a curve in HSV space) is encoded by one latent component, but that might not be the fact. We do show in Fig. 9(b) that with more latent components entangled with each other as a group, the background color semantic meaning can be expressed more fully (a 2D manifold or a restricted 3D region that is not evenly distributed).

Therefore, no one can promise an absolutely perfect correspondence between semantic meaning(s) and a latent component/group. All researchers can do is validate the correctness of their method on synthetic datasets, as we do in Sec. 4.1, and get more interpretable (but cannot promise perfect correspondence) disentanglement results on real-world datasets. Generally speaking, it is nearly impossible for all kinds of disentangling methods to find pure correspondence between a latent component/group and one semantic meaning on real-world datasets. At least there are some noises including other semantic meanings of tiny magnitude. This kind of result should be acceptable in the field of representational learning (disentanglement), especially on real-world datasets where there is no true latent. Otherwise, any interpretation from any method could have small flaws (that can even come from random seeds or the floating point precision of the training device).

\subsection{Benefits and limitations}
PDisVAE is a generalized method, which naturally reduces to standard VAE and fully disentangled VAE, by setting the number of groups to 1 or equal to the latent dimensionality. PDisVAE allows the existence of dummy latent components in groups if the number of learned latent components is less than the specified group rank.

A potential limitation of PDisVAE is the need for an adequate number of groups and components to accurately express the disentangled latent space, expecially when the data demands it, but we may not have guidance on this information. To address this, we might either try different configurations or develop techniques for automatic group rank reduction during training to enhance the performance, which presents a promising direction for further work.

% \clearpage
% \section*{Accessibility}
% Authors are kindly asked to make their submissions as accessible as possible for everyone including people with disabilities and sensory or neurological differences.
% Tips of how to achieve this and what to pay attention to will be provided on the conference website \url{http://icml.cc/}.

% \section*{Software and Data}

% If a paper is accepted, we strongly encourage the publication of software and data with the
% camera-ready version of the paper whenever appropriate. This can be
% done by including a URL in the camera-ready copy. However, \textbf{do not}
% include URLs that reveal your institution or identity in your
% submission for review. Instead, provide an anonymous URL or upload
% the material as ``Supplementary Material'' into the OpenReview reviewing
% system. Note that reviewers are not required to look at this material
% when writing their review.

% % Acknowledgements should only appear in the accepted version.
% \section*{Acknowledgements}

% \textbf{Do not} include acknowledgements in the initial version of
% the paper submitted for blind review.

% If a paper is accepted, the final camera-ready version can (and
% usually should) include acknowledgements.  Such acknowledgements
% should be placed at the end of the section, in an unnumbered section
% that does not count towards the paper page limit. Typically, this will 
% include thanks to reviewers who gave useful comments, to colleagues 
% who contributed to the ideas, and to funding agencies and corporate 
% sponsors that provided financial support.

\section*{Impact Statement}
This paper presents work whose goal is to advance the field of 
Machine Learning. There are many potential societal consequences 
of our work, none which we feel must be specifically highlighted here.

% In the unusual situation where you want a paper to appear in the
% references without citing it in the main text, use \nocite
% \nocite{langley00}

\bibliography{ref}
\bibliographystyle{icml2025}

%%%%%%%%%%%%%%%%%%%%%%%%%%%%%%%%%%%%%%%%%%%%%%%%%%%%%%%%%%%%%%%%%%%%%%%%%%%%%%%
%%%%%%%%%%%%%%%%%%%%%%%%%%%%%%%%%%%%%%%%%%%%%%%%%%%%%%%%%%%%%%%%%%%%%%%%%%%%%%%
% APPENDIX
%%%%%%%%%%%%%%%%%%%%%%%%%%%%%%%%%%%%%%%%%%%%%%%%%%%%%%%%%%%%%%%%%%%%%%%%%%%%%%%
%%%%%%%%%%%%%%%%%%%%%%%%%%%%%%%%%%%%%%%%%%%%%%%%%%%%%%%%%%%%%%%%%%%%%%%%%%%%%%%
\newpage
\appendix
\onecolumn
\section{Appendix}
\subsection{Related works}\label{appendix:related_works}
Realizing there are a lot of methods related to latent disentanglement, we provide Tab.~\ref{tab:related_works} with a list to summarize their contributions and differences.

\begin{table}[!ht]
    \centering
    \caption{Related works}
    \begin{tabular}{lcc}
        \toprule
         & full disentanglement & partial disentanglement \\
        \midrule
        By prior (not flexible) & [1] & [4] \\
        By extra penalty to loss (flexible) & [2][3] & Our PDisVAE \\
        By auxiliary information (supervised) & [7] & \\
        Others & [5][6][8][9][10] & \\
        \bottomrule
    \end{tabular}
    \label{tab:related_works}
\end{table}

$\bullet$ [1] ICA \citep{hyvarinen2000independent}: Traditional ICA uses a non-Gaussian prior to achieving full disentanglement since independence is non-Gaussian from the statistical perspective. However, the choice of the non-Gaussian prior is critical and might be too rigid, hurting the flexibility of the method. \\
$\bullet$ [2] FactorVAE \citep{kim2018disentangling} [3] $\beta$-TCVAE \citep{chen2018isolating}: These two papers start from the statistical definition of full independence to add an extra total correlation to achieve full independence rigorously. The only difference between these two papers is their implementations of minimizing TC. \\
$\bullet$ [4] ISA-VAE \citep{stuhmer2020independent}: ISA-VAE realized the commonly existing group-wise independence (partial disentanglement) in the real-world data. It utilizes a group-wise independent prior called $L^p$-nested distribution to achieve the partial disentanglement. However, they did not validate their approach on partially disentangled synthetic datasets, but merely evaluated their approach using fully disentangled assumptions for dsprites and CelebA datasets. \\
$\bullet$ [5] $\beta$-VAE \citep{burgess2018understanding}: Directly penalize the KL divergence of the VAE ELBO loss, in which TC (in Eq.~\eqref{eq:decomposed_KL}) is implicitly penalized. This approach has been proven to be worse than $\beta$-VAE and FactorVAE. \\
$\bullet$ [6] \citep{locatello2019challenging}: This research presented common challenges in finding disentangled latent through an unsupervised approach, implying supervision with semantic latent labels might be necessary under the assumption of full latent disentanglement. This also gives us a hint that full disentanglement might be a strong and inappropriate assumption and could result in poor latent interpretation. \\
$\bullet$ [7] \citep{ahuja2022weakly}: This paper uses weak supervision from observations generated by sparse perturbations of the latent variables, which requires auxiliary information to the latent variables. \\
$\bullet$ [8] \citep{meo2024alpha}: This paper replace the traditional TC term with a novel TC lower bound to achieve not only disentanglement but generalized observation diversity. \\
$\bullet$ [9] \citep{bhowalvariational}: This paper claims that VAE with orthogonal structure could also achieve latent full disentanglement. \\
$\bullet$ [10] \citep{hsu2024disentanglement}: The full disentanglement is achieved by a technique called latent quantization. The approach is quantizing the latent space into discrete code vectors with a separate learnable scalar codebook per dimension. Besides, weight decay is also applied to the model regularization for better full disentanglement.

\clearpage
\subsection{Marginal independence}\label{appendix:marginal_independence}
This part explains the sufficient but not necessary relationship between ``group-wise independence'' and ``marginal independence''. Consider latent variable $\bm z\in\Rd^M$ contains $M$ components that are independent between $G$ groups. The formal expression is
\begin{equation}
    \bigperp_{g=1}^G \rbr{z_{(g-1)H+1},\dots,z_{gH}} \implies \bigwedge_{i\in g_1,j\in g_2,g_1\neq g_2} z_i \perp z_j
\end{equation}
but not vice versa. We start from the simple counterexample mentioned in Sec.~\ref{sec:problem_definition} to explain why group-wise independence is a sufficient but not necessary condition of marginal independence.

Consider three random variables $z_1, z_2, z_3$ that follow the joint distribution shown in Tab.~\ref{tab:counterexample}. Notice that $z_3$ is actually the exclusive or of the two others, i.e., $z_3 = \operatorname{XOR}(z_1, z_2)$. It is obvious that $z_3 \not\perp (z_1, z_2)$ since when $z_1$ and $z_2$ are different, $p(z_3|z_1, z_2)$ is a discrete Dirac delta function at $z_3 = 0$; but when $z_1$ and $z_2$ are the same, $p(z_3|z_1, z_2)$ is a discrete Dirac delta function at $z_3 = 1$. Marginally, however, $z_1 \perp z_3$ and $z_2 \perp z_3$, since $p(z_3|z_1)$ is always a $p=0.5$ Bernoulli distribution regardless of the value of $z_1$. The same arguments are also applicable to $z_2 \perp z_3$. Therefore, this counterexample shows that $z_1\perp z_3, z_2\perp z_3 \centernot \implies (z_1,z_2) \perp z_3$. In other words, marginal independence does not imply group-wise independence.

Another way of checking this example is by the following theorem. 
% https://math.stackexchange.com/questions/537438/prove-that-fx-and-gy-are-independent-if-x-and-y-are-independent
\begin{theorem}
    $(x_1, \dots, x_I) \perp (y_1, \dots, y_J) \iff \big(f(x_1,\dots, x_I) \perp g(y_1, \dots, y_J)$ $\forall$ measurable functions $f$ and $g\big)$.
\end{theorem}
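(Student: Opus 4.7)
The plan is to split the biconditional into its two directions, which are very unequal in difficulty. The backward direction ($\Leftarrow$) is essentially free: the identity maps $f = \mathrm{id}_{\Rd^I}$ and $g = \mathrm{id}_{\Rd^J}$ are trivially Borel-measurable, and instantiating the hypothesis at this particular pair $(f,g)$ returns exactly $(x_1,\dots,x_I) \perp (y_1,\dots,y_J)$. So the entire content of the theorem sits in the forward direction.

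For the forward direction ($\Rightarrow$), I would argue directly from the preimage characterization of independence. Fix arbitrary measurable $f:\Rd^I \to \Rd^{I'}$ and $g:\Rd^J \to \Rd^{J'}$, and let $A$, $B$ be Borel sets in the respective codomains. Measurability of $f$ and $g$ makes $f^{-1}(A) \subseteq \Rd^I$ and $g^{-1}(B) \subseteq \Rd^J$ Borel, so writing $\bm{x} \coloneqq (x_1,\dots,x_I)$ and $\bm{y} \coloneqq (y_1,\dots,y_J)$, the chain
\begin{align*}
P\rbr{f(\bm{x})\in A,\, g(\bm{y})\in B}
&= P\rbr{\bm{x}\in f^{-1}(A),\, \bm{y}\in g^{-1}(B)} \\
&= P\rbr{\bm{x}\in f^{-1}(A)}\,P\rbr{\bm{y}\in g^{-1}(B)} \\
&= P\rbr{f(\bm{x})\in A}\,P\rbr{g(\bm{y})\in B}
\end{align*}
closes the argument, where the middle equality is the hypothesized joint independence of $\bm{x}$ and $\bm{y}$. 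Since $A$ and $B$ were arbitrary, $f(\bm{x}) \perp g(\bm{y})$ by definition. Equivalently, the same argument can be packaged in the language of generated $\sigma$-algebras: $\sigma(f(\bm{x})) \subseteq \sigma(\bm{x})$ and $\sigma(g(\bm{y})) \subseteq \sigma(\bm{y})$, and independence of two $\sigma$-algebras is automatically inherited by all sub-$\sigma$-algebras.

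There is no genuinely hard step here; the theorem is a textbook consequence of measure-theoretic bookkeeping. The only place requiring any care is making the measurability convention explicit — Borel-measurability with respect to the Euclidean Borel $\sigma$-algebra, with $f$ and $g$ allowed to map into arbitrary Euclidean spaces rather than $\Rd$ — so that the preimage step above is justified. This convention also clarifies the intended use of the theorem in Appendix~\ref{appendix:marginal_independence}: choosing $f(z_1,z_2) = \operatorname{XOR}(z_1,z_2)$ and $g = \mathrm{id}$, the contrapositive of the forward direction immediately certifies that the XOR construction refutes $(z_1,z_2)\perp z_3$, even though marginally $z_1\perp z_3$ and $z_2 \perp z_3$.
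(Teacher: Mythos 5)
Your proposal is correct and follows the same structure as the paper's proof: the backward direction is obtained by instantiating the quantifier at the identity maps, exactly as the paper does, while your preimage/sub-$\sigma$-algebra argument for the forward direction simply fills in the step the paper dismisses as ``obvious.'' No substantive difference in approach.
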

\begin{proof}
    The $\implies$ is obvious. To prove $\impliedby$, simply taking $f$ and $g$ to be identity function, i.e., $f(x_1, \dots, x_I) = (x_1, \dots, x_I)$, $g(y_1, \dots, y_J) = (y_1, \dots, y_J)$.
\end{proof}
To check the example, consider the distribution of $(z_1 + z_2)$. $p(z_3 |(z_1 + z_2) = 0)$ is a discrete Dirac delta function at $z_3 = 1$, which is different from $p(z_3 | (z_1 + z_2) = 1)$ is a discrete Dirac delta function at $z_3 = 0$. Therefore, $(z_1, z_2) \not\perp z_3$.

To rigorously diagnose where $\impliedby$ breaks, we can write
\begin{equation}
    p(z_1, z_2, z_3) = p(z_1 | z_2, z_3) p(z_2, z_3) = p(z_1 | z_2, z_3) p(z_2) p(z_3). 
\end{equation}
Note that in the last term, $p(z_1|z_2, z_3) \neq p(z_1 | z_2)$. Specifically, $z_3$ cannot be removed just because of $z_1 \perp z_3$.

\begin{table}[htbp]
    \centering
    \caption{The distribution table of $p(z_1,z_2,z_3)$.}
    \label{tab:counterexample}
    \begin{tabular}{cccc}
        \toprule
        $z_1$ & $z_2$ & $z_3$ & $p(z_1,z_2,z_3)$ \\
        \midrule
        0 & 0 & 1 & 0.25 \\
        0 & 1 & 0 & 0.25 \\
        1 & 0 & 0 & 0.25 \\
        1 & 1 & 1 & 0.25 \\
        \bottomrule
    \end{tabular}
\end{table}

\subsection{Batch approximation}\label{appendix:batch_approximation}
\subsubsection{Importance sampling}
Although Eq.~\eqref{eq:IS} in the main text intuitively gives the batch approximation, we still need a rigorous derivation to prove this is exactly the importance sampling (IS) we want. First, we have the aggregated posterior that can be expressed in different ways:
\begin{equation}
    q(z) = \sum_{n=1}^N q(z,n) = \sum_{n=1}^N q(z|n) q(n) = \frac{1}{N} \sum_{n=1}^N q(z|n) = \Ed_{q(n)}[q(z|n)].
\end{equation}
However, to not confuse readers, we will keep the form $q(z) = \sum_{n=1}^N q(z,n)$ until the last step.

When we have a batch of size $M$: $\Bc_M \coloneqq \cbr{n_1,n_2,\dots, n_M}$ (without replacement) and a particular sampled $z\sim q(z|n_*)$, where $n_*\in\Bc_M$, we want the importance sampling approximation of $q(z)$. According to Monte Carlo estimation,
\begin{equation}
    \hat q(z) = \frac{1}{M}\sum_{m=1}^M \frac{q(z,n_m)}{r(n_m)},
\end{equation}
where $r$ is the proposal distribution. Note that $r(n_m) \neq \frac{1}{N},\ \forall n_m \in \Bc$, since we must have $n_*\in\Bc_M$. Therefore, we need to understand the distribution of $r(n_m)$.

First, since we must have $n_*\in\Bc_M$, and the Monte Carlo estimation is the average on $\Bc_M$,
\begin{equation}
    r(n_*) = \underbrace{1}_{n_* \text{ must be in } \Bc_M} \times \underbrace{\frac{1}{|\Bc_M|}}_{n_*\ \text{is a Monte Carlo sample from} \Bc_M} = \frac{1}{M}.
\end{equation}
Second, for other $n_m\notin \Bc_M$,
\begin{equation}
    r(n_m) = \underbrace{\frac{\binom{N-2}{M-2}}{\binom{N-1}{M-1}}}_{n_m \text{is selected in batch } \Bc_M} \times \underbrace{\frac{1}{|\Bc_M|}}_{n_m\ \text{is a Monte Carlo sample from} \Bc_M} = \frac{M-1}{N-1}\frac{1}{M}.
\end{equation}
$\binom{N-1}{M-1} = \frac{(N-1)!}{(M-1)!((N-1) - (M-1))!}$ is the number of all possible combinations of $\Bc_M$ that already contains $n_*$ (so we choose $M-1$ from the remaining $N-1$). $\binom{N-2}{M-2} = \frac{(N-2)!}{(M-2)!((N-2) - (M-2))!}$ is the number of all possible combinations of $\Bc_M$ that already contains $n_*$ and also contains $n_m$ (so we choose $M-2$ from the remaining $N-2$). Finally, we have
\begin{equation}
    \begin{split}
        \hat q(z) = & \frac{1}{M} \sum_{m=1}^M \frac{q(z,n_m)}{r(n_m)} \\
        = & \frac{1}{M} \frac{q(z|n_*) q(n_*)}{r(n_*)} + \sum_{n_m\in (\Bc_M\backslash\cbr{n_*})} \frac{1}{M} \frac{q(z|n_m)q(n_m)}{r(n_m)} \\
        = & \frac{1}{M} \frac{q(z|n_*) \frac{1}{N}}{\frac{1}{M}} + \sum_{n_m\in (\Bc_M\backslash\cbr{n_*})} \frac{1}{M} \frac{q(z|n_m)\frac{1}{N}}{\frac{M-1}{N-1}\frac{1}{M}} \\
        = & \frac{1}{N} q(z|n_*) + \sum_{n_m\in (\Bc_M\backslash\cbr{n_*})} \frac{N-1}{M-1}\frac{1}{N} q(z|n_m).
    \end{split}
\end{equation}

\subsubsection{Variance}
From \citet{chen2018isolating}, without loss of generality, assume $n_* = n_1$ and
\begin{equation}
    \begin{split}
        \mathrm{MSS} = & \frac{1}{N} q(z|n_*) + \sum_{m=2}^{M-1} \frac{1}{M-1} q(z|n_m) + \frac{N-M+1}{N(M-1)} q(z|n_M) \\
        = & \frac{1}{N} q(z|n_*) + \sum_{m=2}^{M-1} \frac{N}{M-1} \frac{1}{N}q(z|n_m) + \frac{N-M+1}{(M-1)} \frac{1}{N} q(z|n_M).
    \end{split}
\end{equation}

A sketch to compute the variances of the two methods is to think of them as sampled datasets of size $M$. Specifically, for IS, the inverse importance weights are a dataset of $\mathrm{IS}_0 \coloneqq \cbr{1, \underbrace{\frac{N-1}{M-1}, \dots, \frac{N-1}{M-1}}_{M-1}}$. For, MSS, the inverse importance weights are a dataset of $\mathrm{MSS}_0 \coloneqq \cbr{1, \underbrace{\frac{N}{M-1}, \dots, \frac{N}{M-1}}_{M-2}, \frac{N-M+1}{M-1}}$.

There means are all $\frac{N}{M}$, since
\begin{equation}
    \begin{cases}
        \overline{\mathrm{MSS}_0} = \frac{1}{M} \rbr{1 + (M-2) \frac{N}{M-1} + \frac{N-M+1}{M-1}} = \frac{N}{M} \\
        \overline{\mathrm{IS}_0} = \frac{1}{M} \rbr{1 + (M-1) \frac{N-1}{M-1}} = \frac{N}{M}
    \end{cases}
\end{equation}

Now we compute their variances.
\begin{equation}\label{eq:variance_MSS}
    \begin{split}
        \Var[\mathrm{MSS}] \propto & \Var[\mathrm{MSS}_0] \\
        = & \frac{1}{M} \sbr{\rbr{1-\frac{N}{M}}^2 + (M-2)\rbr{\frac{N}{M-1} - \frac{N}{M}}^2 + \rbr{\frac{N-M+1}{M-1} - \frac{N}{M}}^2} \\
        = & \frac{2M^2 - (2N+2)M + N^2}{M^2(M-1)}.
    \end{split}
\end{equation}
\begin{equation}\label{eq:variance_IS}
    \begin{split}
        \Var[\mathrm{IS}] \propto & \Var[\mathrm{IS}_0] \\
        = & \frac{1}{M} \sbr{\rbr{1 - \frac{N}{M}}^2 + (M-1)\rbr{\frac{N-1}{M-1} - \frac{N}{M}}^2} \\
        = & \frac{(N-M)^2}{M^2(M-1)}.
    \end{split}
\end{equation}
Since
\begin{equation}
     \Var[\mathrm{IS}_0] - \Var[\mathrm{MSS}_0] = \frac{2-M}{M(M-1)} \leqslant 0,\ \forall M \geqslant 2,
\end{equation}
the effectiveness of IS is higher, and hence IS is a more stable approximation than MSS.

\subsubsection{Empirical evaluation}
To validate the aforementioned superiority of our IS batch estimation method, we simulate a dataset consisting of 10 data points shown in Fig.~\ref{fig:batch_approximation}(left). Each time, we run the three batch approximation methods on a batch of three randomly sampled points. We repeat this 1000 times and show their empirical evaluations in Fig.~\ref{fig:batch_approximation}(right). Compared with the unbiased MWS estimator, MMS and IS are unbiased. Compared with MMS, the IS estimator has low empirical variance across 1000 repeats, which implies a more stable estimation.

\begin{figure}[!ht]
    \centering
    \includegraphics[width=\linewidth]{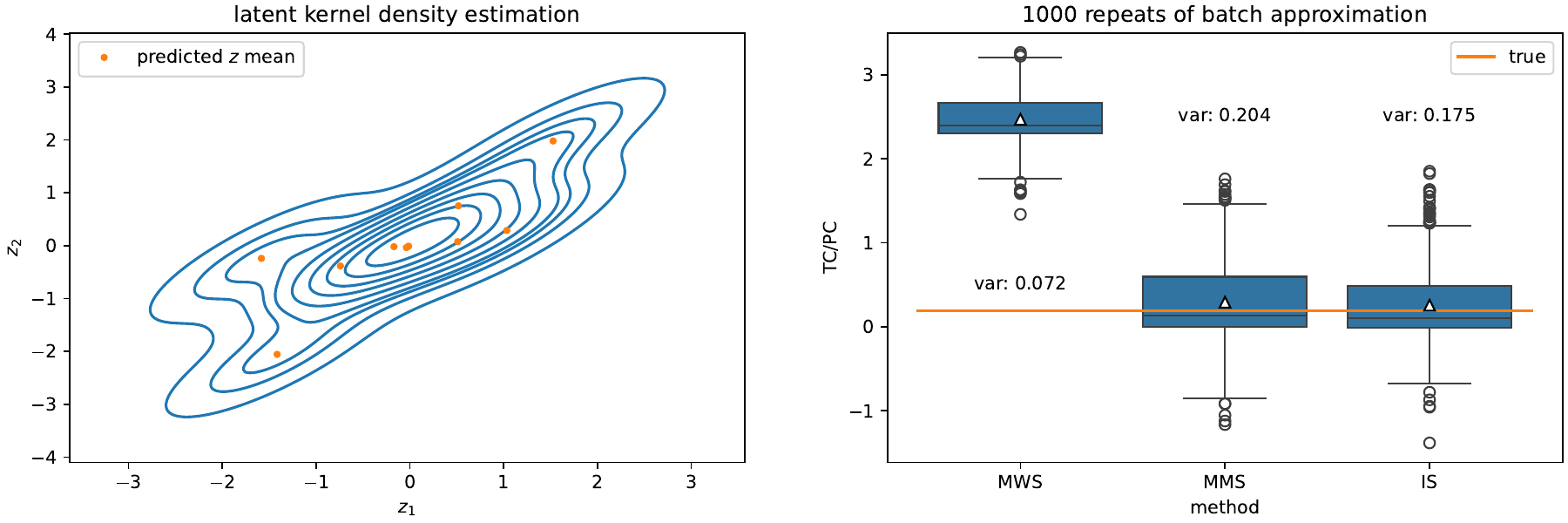}
    \vspace{-0.3in}
    \caption{\textbf{Left}: Predicted mean of the latent $\bm z = (z_1, z_2)$ and its kernel density estimation. \textbf{Right}: 1000 repeats of batch approximations by the three methods, their empirical variance across the 1000 repeats.}
    \label{fig:batch_approximation}
\end{figure}

\clearpage
\subsection{Supplementary results}\label{appendix:supplimentary_results}
\subsubsection{Synthetic validation: group-wise independent}
\begin{figure}[!ht]
    \centering
    \includegraphics[width=\linewidth]{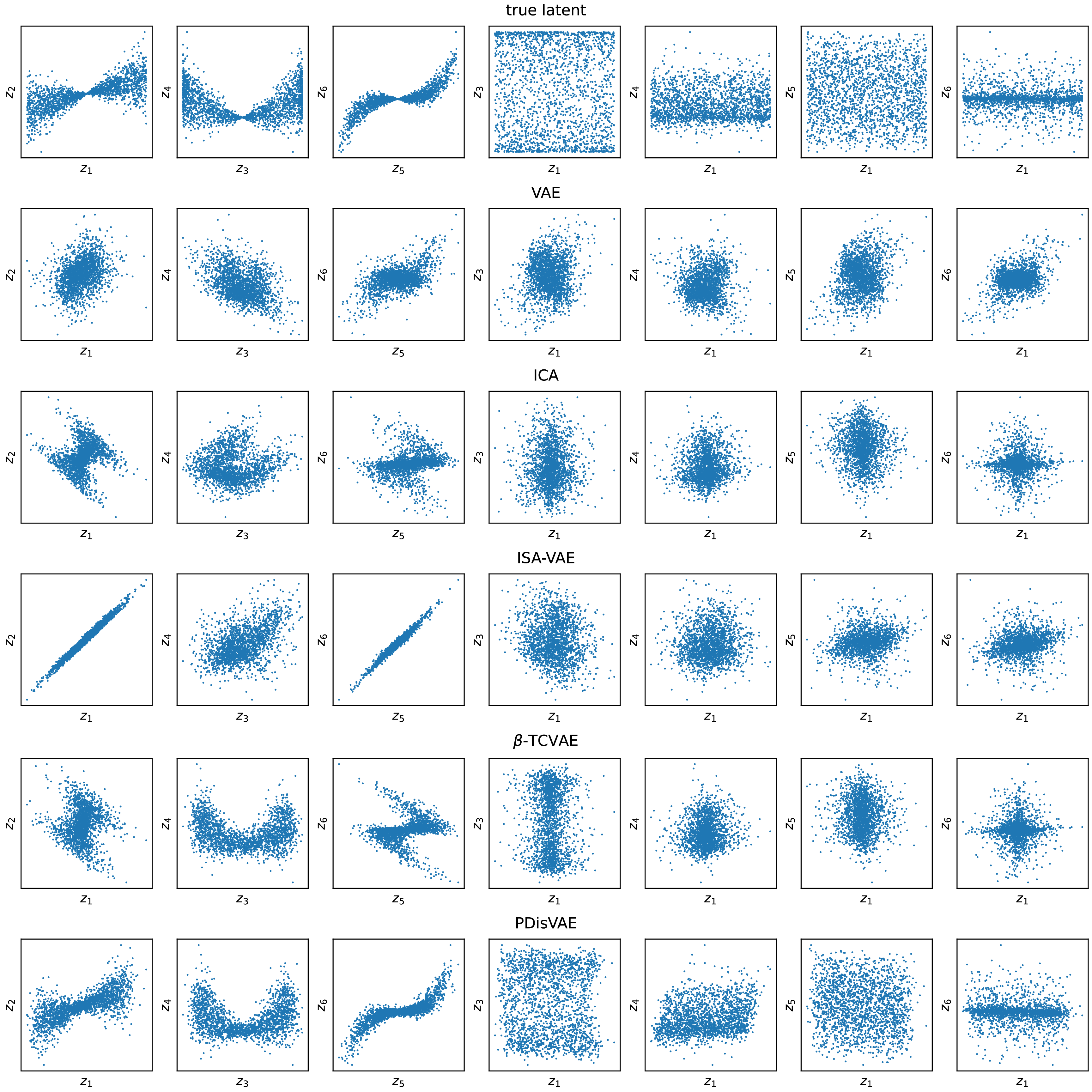}
    \vspace{-0.3in}
    \caption{Latent alignment results of different methods. Each group is aligned and matched to the true latent shown in Fig.~\ref{fig:synthetic_weak_box}(a). Some between-group pairs are also plotted to visually understand the marginally independent distributions between groups.}
    \label{fig:synthetic_weak_latent_all}
\end{figure}

\subsubsection{Ablation}\label{Appendix:ablation}

\subsubsection{Flexibly reduce to the fully independent case}\label{Appendix:fully_independent}
\paragraph{Dataset and experimental setup.} To validate that PDisVAE can get the same results as from a fully disentangled VAE when the latent is fully independent, we create a dataset consisting of $N=2000$ points in $K=3$ latent space $\bm z^{(n)} \in \Rd^3$, where the three latent components are independent with each other $z_1\perp z_2\perp z_3$. Their distributions are shown in Fig.~\ref{fig:synthetic_strong_box}(a) and Fig.~\ref{fig:synthetic_strong_latent}. The observation $\bm x$ is linearly mapped from the latent $\bm z$ to a $D = 20$ dimensional space $\bm x^{(n)} \in \Rd^{20}$, and then Gaussian noise $\epsilon_d^{(n)} \overset{i.i.d.}{\sim} \Nc\rbr{0,0.5^2}$ are added. Although we only have $K=3$ true latent components, we still learn $K=6$ components to compare their flexibility when the true number of latent components is unknown. The experimental setup is the same as the previous one.

\begin{figure*}[!ht]
    \centering
    \includegraphics[width=\linewidth]{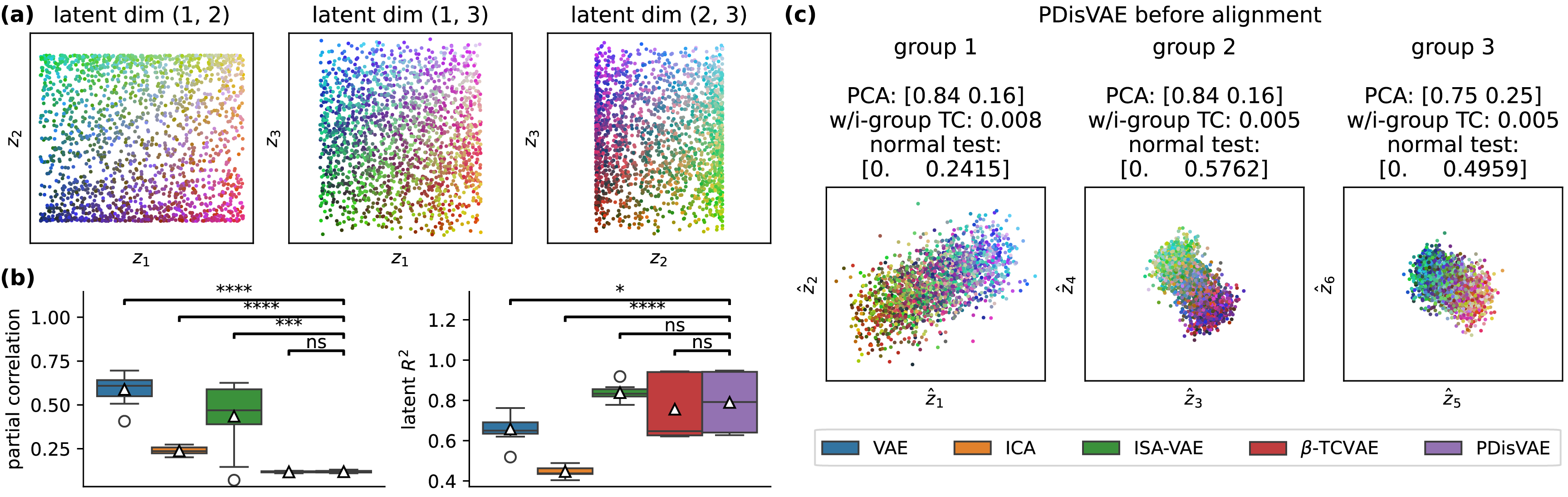}
    \vspace{-0.3in}
    \caption{\textbf{(a)}: The true latent $\bm z \in \Rd^3$ coded by RGB = $z_1z_2z_3$, where three components are $z_1\perp z_2\perp z_3$. \textbf{(b)}: The PC of the estimated latent and the latent $R^2$ after alignment to the true latent in (a). The $t$-test between PDisVAE and others shows that PDisVAE is similar to $\beta$-TCVAE (ns: $p>0.5$, *: $p\leqslant 0.05$, ****: $p\leqslant 0.0001$). \textbf{(c)}: The estimated latent of PDisVAE before aligning to the true latent shown in (a). The arrow in each plot shows the embedded true latent direction.}
    \label{fig:synthetic_strong_box}
\end{figure*}

\paragraph{Results.} The PC box plot and latent $R^2$ plot in Fig.~\ref{fig:synthetic_strong_box}(b) show that both $\beta$-TCVAE and PDisVAE achieve the lowest partial correlation and the highest latent $R^2$ on this fully disentangled dataset, which implies that PDisVAE automatically reduces to fully independent result if the group rank is deficient, as illustrated in case 2 in Fig.~\ref{fig:illustration}. In general, the actual group rank can be detected by PDisVAE and if the true group rank is less than the specified group dimensionality, dummy estimated latents will complemented in the corresponding group. Due to the strong requirement in ICA that tries to find logcosh-independent components but only three exist, ICA is not able to correctly identify three and find three dummy dimensions. This means logcosh might be too strong to allow the existence of dummy variables, which could be harmful when we do not know the true number of latent components. Fig.~\ref{fig:synthetic_strong_latent} also visually shows that $\beta$-TCVAE and PDisVAE accurately estimate the three latent distributions the best, which is consistent with the latent $R^2$ plot in Fig.~\ref{fig:synthetic_strong_box}(b).

To identify the three dummy latent dimensions complementing the three groups respectively through an unsupervised approach, we plot the PDisVAE result before alignment in Fig.~\ref{fig:synthetic_strong_box}(c). First, within-group TCs are all very small, indicating that the result is not the case 1 in Fig.~\ref{fig:illustration}. Since ``independence is non-Gaussian'', we can find a direction within each group that yields $p>0.05$, which accepts the null hypothesis of the normal test that a Gaussian noise dummy dimension exists, corresponding to case 2 in Fig.~\ref{fig:illustration}. The arrows in Fig.~\ref{fig:synthetic_strong_box}(c) also visually indicate the embedded true latent direction.

\begin{figure}[!ht]
    \centering
    \includegraphics[width=\linewidth]{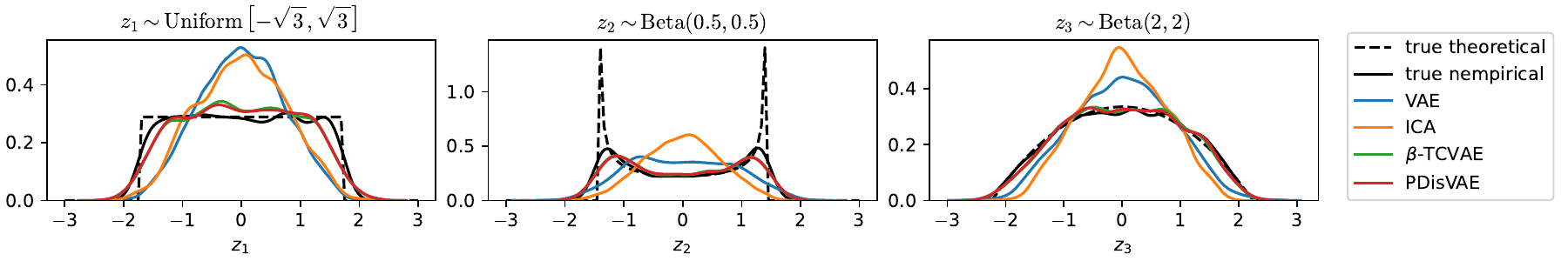}
    \vspace{-0.3in}
    \caption{Estimated and true latent distribution after alignment to the true latent shown in Fig.~\ref{fig:synthetic_strong_box}(a).}
    \label{fig:synthetic_strong_latent}
\end{figure}

\clearpage
\subsubsection{Synthetic application: partial dsprites}\label{appendix:pdsprites}
\begin{figure}[!ht]
    \centering
    \includegraphics[width=\linewidth]{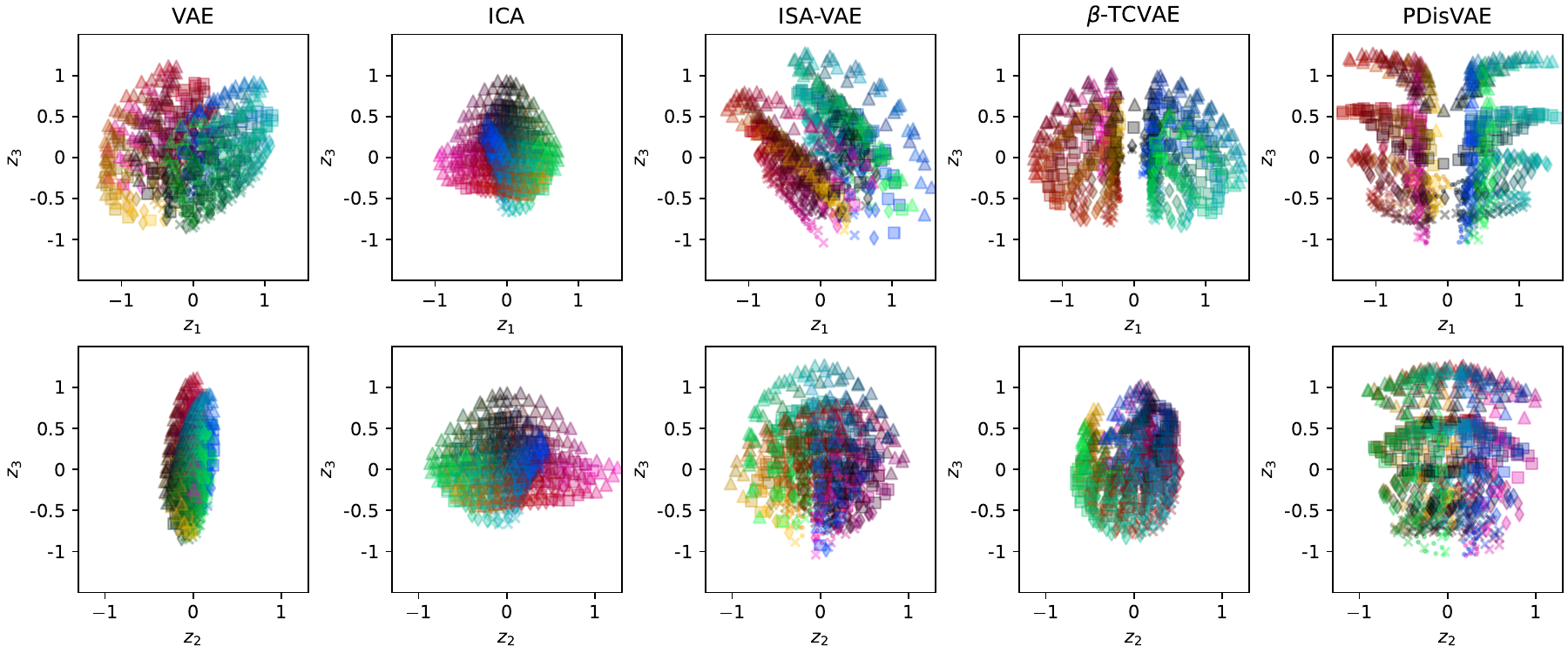}
    \vspace{-0.3in}
    \caption{The latent plot after alignment in latent space $(z_1, z_3)$ and $(z_2, z_3)$ for different methods. The color representation for location is the same as the color representation in Fig.~\ref{fig:pdsprites}(a), and the marker of the point in the latent plots represents the size of the square in the observation images.}
    \label{fig:pdsprites_latent_between_group}
\end{figure}

\begin{table}[!ht]
    \centering
    \caption{The PC, latent $R^2$, latent MSS, and adapted mutual information gap (MIG) evaluated for different methods on the dsprites dataset.}
    \label{tab:pdsprites}
    \begin{tabular}{lcccc}
        \toprule
         & PC $\downarrow$ & $R^2$ $\uparrow$ & MSE $\downarrow$ & MIG $\uparrow$ \\
        \midrule
        VAE & 1.01 (0.02) & 0.22 (0.04) & 0.29 (0.02) & 0.15 (0.01) \\
        ICA & 1.76 (0.07) & 0.22 (0.06) & 0.28 (0.03) & 0.14 (0.09) \\
        ISA-VAE & \textbf{0.70 (0.01)} & 0.23 (0.02) & 0.33 (0.01) & 0.24 (0.08) \\
        $\beta$-TCVAE & 0.91 (0.10) & 0.33 (0.06) &\textbf{0.24 (0.04)} & 0.36 (0.13) \\
        PDisVAE & \textbf{0.68 (0.04)} & \textbf{0.54 (0.08)} & \textbf{0.23 (0.04)} &\textbf{0.49 (0.07)} \\
        \bottomrule
    \end{tabular}
\end{table}

\clearpage
\subsubsection{Real-world applications}
\begin{figure}[!ht]
    \centering
    \includegraphics[width=\linewidth]{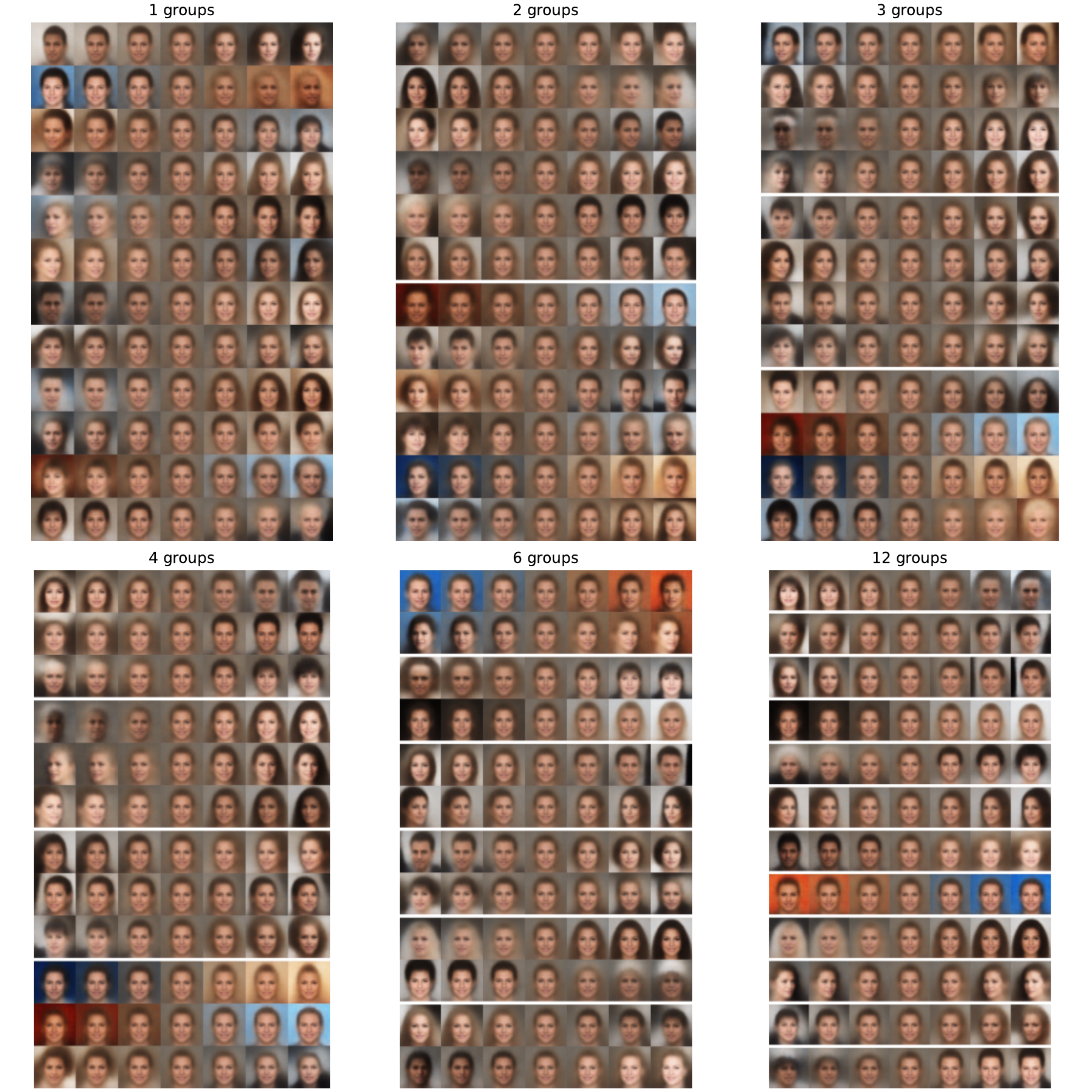}
    \vspace{-0.3in}
    \caption{The reconstructed images by varying one of the $K=12$ disentangled latent from applying PDisVAE to the CelebA dataset with the different number of groups $G\in\{1,2,3,4,6,12\}$. When $G=1$, PDisVAE becomes the standard VAE; when $G=K=12$, PDisVAE becomes the fully entangled VAE (e.g., $\beta$-TCVAE or FactorVAE). In each plot, each row is by varying one latent component (latent dimension) while fixing all others to 0s.}
    \label{fig:celeba_all}
\end{figure}

\clearpage

%%%%%%%%%%%%%%%%%%%%%%%%%%%%%%%%%%%%%%%%%%%%%%%%%%%%%%%%%%%%%%%%%%%%%%%%%%%%%%%
%%%%%%%%%%%%%%%%%%%%%%%%%%%%%%%%%%%%%%%%%%%%%%%%%%%%%%%%%%%%%%%%%%%%%%%%%%%%%%%

\end{document}